\title{A note on regularised NTK dynamics with an application to PAC-Bayesian training}
\author{\name Eugenio Clerico\thanks{Work mostly done while affiliated at the Department of Statistics, University of Oxford, UK.} \email eugenio.clerico@gmail.com \\
      \addr Universitat Pompeu Fabra,
      Barcelona
      \AND
 \addr 
      \name Benjamin Guedj \email b.guedj@ucl.ac.uk \\
      \addr Centre for AI and Department of Computer Science, University College London \& Inria London}
\newcommand{\Cc}{\mathcal C}
\newcommand{\dd}{\mathrm{d}}
\newcommand{\dpar}[2]{\frac{\partial #1}{\partial #2}}
\newcommand{\E}{\mathbb E}
\newcommand{\Ell}{\mathcal L}
\newcommand{\emme}{\mathfrak{m}}
\newcommand{\Err}{\mathcal R}
\newcommand{\KL}{\mathrm{K}\!\mathrm{L}}
\newcommand{\Id}{\mathrm{Id}}
\newcommand{\lay}{l}
\newcommand{\llay}{{l'}}
\newcommand{\N}{\mathcal N}
\newcommand{\pleq}{\underset{1-\delta}{\leq}}
\newcommand{\Pp}{\mathbb{P}}
\newcommand{\R}{\mathbb{R}}
\newcommand{\sav}[1]{\left\langle#1\right\rangle_s}
\newcommand{\X}{\mathcal{X}}
\newcommand{\Y}{\mathcal{Y}}
\newcommand{\W}{W}
\newcommand{\Ww}{\mathcal{W}}
\newcommand{\Z}{\mathcal{Z}}
\DeclareMathOperator{\erf}{erf}
\DeclareMathOperator*{\sign}{sign}
\newtheorem{lemma}{Lemma}
\newtheorem{prop}{Proposition}
\newtheorem{theorem}{Theorem}
\newenvironment{manualprop}[1]{%
  \manpropin
}{\endmanpropin}
\begin{document}

\maketitle

\begin{abstract}
We establish explicit dynamics for neural networks whose training objective has a regularising term that constrains the parameters to remain close to their initial value. This keeps the network in a \textit{lazy training} regime, where the dynamics can be linearised around the initialisation. The standard neural tangent kernel (NTK) governs the evolution during the training in the infinite-width limit, although the regularisation yields an additional term appears in the differential equation describing the dynamics. This setting provides an appropriate framework to study the evolution of wide networks trained to optimise generalisation objectives such as PAC-Bayes bounds, and hence potentially contribute to a deeper theoretical understanding of such networks.
\end{abstract}
\section{Introduction}\label{sec:intro}
The analysis of infinitely wide neural networks can be traced back to \citet{neal95}, who considered this limit for a shallow (1-hidden-layer) network and showed that, before the training, it behaves as a Gaussian process when its parameters are initialised as independent (suitably scaled) normal distributions. A similar behaviour was later established for deep architectures, also allowing for the presence of skip-connections, convolutional layers, \textit{etc.}\ \citep{lee18, lee19, arora19, novak19, garriga19, yang19scaling, hayou21}. \citet{lee19, lee20} among others brought empirical evidence that wide (but finite-size) architectures are still well approximated by the Gaussian limit, while finite size corrections were derived in \citet{antognini19} and \citet{basteri22}.

Although the previous results hold only at the initialisation (as the Gaussian process approximation is only valid before the training), \citet{jacot18} established that the evolution of an infinitely wide network can still be tracked analytically during the training, under the so-called neural tangent kernel (NTK) regime. In a nutshell, they showed that the usual gradient flow on the parameters space induces the network's output to follow a kernel gradient flow in functional space, governed by the NTK. A main finding of \citet{jacot18} is that although for general finite-sized networks the NTK is time-dependent and random at the initialisation, in the infinite-width limit it becomes a deterministic object that can be exactly computed, and it stays fixed throughout the training. Later, \citet{lee19} provided a new proof of the convergence to the NTK regime, while \citet{yang19scaling} established similar results for more general architectures, such as convolutional networks. \citet{chizat2019lazy} extended the idea of linearised dynamics to more general models, introducing the concept of \textit{lazy training} and finding sufficient conditions for a network to reach such regime. They also pointed out that this linearised behaviour may be detrimental to learning, as also highlighted by \citet{yang2022feature} who showed that the NTK dynamics prevents a network hidden layers from effectively learning features. However, the NTK has been a fruitful tool to analyse convergence \citep{allenzhu2019convergence, du2019gradient} and generalisation \citep{allenzhu2019learning, arora2019fine, cao2019generalization} for over-parameterised settings under (stochastic) gradient descent.

The standard derivation of the NTK dynamics \citep{jacot18, lee19} requires the network to be trained by gradient descent to optimise an objective that depends on the parameters only through the network's output. This setting does not allow for the presence of regularising terms that directly involve the parameters. Yet, in practice, a network reaches the NTK regime when its training dynamics can be linearised around the initialisation. This happens if the network parameters stay close enough to their initial value throughout the training, the defining property of the \textit{lazy training} regime. It is then natural to expect that a regularising term that enforces the parameters to stay close to their initialisation will still favour linearised dynamics, and so bring a training evolution that still can be expressed in terms of a fixed and deterministic NTK. This is the focus of the present paper, where we discuss the evolution of a network in the NTK regime trained with an $\ell^2$-regularisation that constrains the parameters to stay close to their initial values. Regularisers of this kind typically appear in PAC-Bayes-inspired training objectives, where the mean vector of normally distributed stochastic parameters is trained via (stochastic) gradient descent on a generalisation bound (an approach initiated by the seminal work of \citealp{langford02not} and further explored by \citealp{alquier16, dziugaite17,neyshabur18,letarte19dichotomize,nagarajan19,zhou18,nozawa2019pacbayesian,biggs21,biggs22shallow,dziugaite21role, perezortiz21tighter,perezortiz21learning,perezortiz2021progress,cherief2021vae,lotfi22pac,tinsi22, clerico22conditionally,clerico23wide,viallard2023wasserstein}). As these training objectives yield generalisation guarantees, we conjecture that the exact dynamics that we derive could be a starting point to obtain generalisation bounds for more general kernel gradient descent algorithms.

As final remarks, we note that \citet{chen2020generalized} considers a NTK regime that allows for regularisation, but they only consider the mean-field setting of two-layer neural networks (introduced by \citealp{chizat2018global, mei2018mean}, and further explored by \citealp{mei2019meanfield, wei2020regularization, fang2019convex}), where the network is not initialised with a scaling yielding a Gaussian process. Finally, we mention that also \citet{huang22} attempted the analysis of PAC-Bayesian dynamics via NTK. However, their approach approach differs from ours, it does not highlight the effect of the regularisation term, and the whole analysis deals with a simple shallow stochastic architecture where only one layer is trained.

\paragraph{Outline.} We present our framework and notation in \Cref{sec:notation} and then treat the unregularised NTK dynamics in \Cref{sec:unreg} as a starter. We then move on to the more interesting case of regularised dynamics in \Cref{sec:reg}, first for the simple case of $\ell^2$-regularisation and then for a more general regularising term. We instantiate our analysis to the example of least square regression in \Cref{sec:leastsquare} and illustrate the merits of our work with an application to PAC-Bayes training of neural networks in \Cref{sec:pacbayes}. The paper ends with concluding remarks in \Cref{sec:conclusion} and we defer technical proofs to \Cref{app}.

\section{Setting and notation}\label{sec:notation}
We consider a fully-connected feed-forward neural network of depth $L$, which we denote as $F:\X\to\R^q$, where $\X\subset\R^p$ is a compact set. We denote as $n_
\lay$ the width of the $\lay$-th hidden layer of the network, as $n_0=p$ the input dimension and $n_L=q$ the output dimension. We consider the network 
\begin{align*}
    &F(x) = U^L(x)\,;\qquad
    U^{\lay+1}_i(x) = \frac{1}{\sqrt{n_\lay}}\sum_{j=1}^{n_\lay}W^{\lay+1}_{ij}\phi(U^\lay_j(x))\,;\qquad
    U^1_i(x) = \frac{1}{\sqrt {n_0}}\sum_{j=1}^{n_0}W^1_{ij}x_j\,; 
\end{align*}
where $\phi$ is the network's activation, acting component-wise. The network's prediction in the label space $\Y$ is $\hat y(x) = f(F(x))$, for some $f:\R^q\to\Y$. We denote as $\Ww$ the parameter space where the weights lie, and as $\W$ the parameters of the network. We consider the infinite-width limit, where all the hidden widths $n_\lay$ ($\lay = 1, \dots, L-1$) are taken to infinity\footnote{Following the approach of \citet{jacot18}, we will consider the case where this limit is taken recursively, layer by layer (that is we also have $n_{\lay+1}/n_\lay\to 0$).}. 

Data consists in pairs instance-label $z=(x, y)\in\Z=\X\times\Y$, with $x\in\X$ and $y\in\Y$. 
We consider a non-negative loss function $\ell:\Ww\times\Z\to[0,\infty)$. For a dataset $s\in\Z^m$, we define the empirical loss $\Ell_s$ as the average of $\ell$ on $s$, namely
$$\Ell_s(\W) = \frac{1}{m}\sum_{z\in s} \ell(\W, z)\,.$$
As we will often encounter empirical averages, we define the following handy notation
$$\sav{g(Z)} = \sav{g(X, Y)} = \frac{1}{m}\sum_{(x, y)\in s}g(x, y)\,,$$
so that $\Ell_s(\W) = \sav{\ell(\W, Z)}$. We assume that $\ell$ depends on $\W$ only through the network's output $F$, \textit{i.e.}, there exists a function $\hat\ell$ such that we can rewrite
$$\ell(\W, z) = \hat \ell(F(x), y)\,.$$

The network training follows the gradient of a learning objective $\Cc_s$, namely 
$$\partial_t \W(t) = -\nabla \Cc_s(\W(t))\,,$$
where $\nabla$ denotes the gradient with respect to the parameters. We assume that $\Cc_s$ can be split into two terms, the empirical loss, and a regularisation term that depends directly on the parameters (without passing through the network's output $F$) and does not depend on $s$:
\begin{equation}\label{eq:Cs}
    \Cc_s = \Ell_s + \lambda\Err\,,
\end{equation} for some $\lambda\geq 0$.
We let $\rho:\R^+\to\R^+$ be a strictly increasing differentiable function such that $\rho(0)=0$, and we consider the case of a regulariser in the form
\begin{equation}\label{eq:defR}\Err(\W) = \rho\left(\frac{1}{2}\|\W - \W(0)\|_{F}^2\right)\,,\end{equation}
with $\W(0)$ denoting the value of the parameters at the initialisation, and with $\|\cdot\|_F^2$ denoting the square of the Frobenius norm on $\W$, namely $\|\W\|_F^2 = \sum_{\lay,i,j} (W^\lay_{ij})^2$. 

For conciseness, we write $\Delta g(t)$ for $g(t)-g(0)$, where $g$ is any time-dependent term. Moreover, we introduce the following notation
\begin{align}\begin{split}
    \psi^{\lay;\llay}_{k;ij}(x;t) &= \frac{\partial U^{\lay}_k(x;t)}{\partial W^\llay_{ij}}\,;\\
    \Theta^{\lay}_{kk'}(x,x';t) &= \sum_{\llay=1}^{\lay}\psi^{\lay;\llay}_{k}(x;t)\cdot\psi^{\lay;\llay}_{k'}(x';t)\,;\label{eq:theta}\\
    \Xi^{\lay}_k(x;t)&=\sum_{\llay=1}^{\lay}\psi^{\lay;\llay}_k(x;t)\cdot\Delta W^\llay(t)\,,\end{split}
\end{align}
where `$\;\cdot\;$' denotes the component-wise inner product between matrices (or vectors) of the same size. $\Theta^L$ is the so-called neural tangent kernel (NTK) of the network.

\section{The unregularised dynamics}\label{sec:unreg}
Without regularisation (namely when we set $\rho\equiv 0$) the standard NTK dynamics holds \citep{jacot18, lee19}. We briefly cover this case, where the width of the network is taken to infinity. 

At the initialisation, the network's output behaves as a centred Gaussian process, whose covariance kernel can be computed recursively. More precisely, we have that all the components of the output are i.i.d.\ Gaussian processes defined by
$$F_k\sim\mathcal{GP}(0, \Sigma^L)\,,$$
where
\begin{equation}\label{eq:sigmarec}\Sigma^1(x, x') = \frac{x\cdot x'}{n_0}\,;\qquad \Sigma^{\lay+1}(x, x') = \E_{(\zeta,\zeta')\sim\N(0,\Sigma^\lay(x,x'))}[\phi(\zeta)\phi(\zeta')]\,.\end{equation}

Moreover, the neural tangent kernel $\Theta^L$, defined in \eqref{eq:theta}, tends to a diagonal deterministic limit (with respect to the initialisation randomness), and stays constant during the training. In particular, we have
$$\Theta^L_{kk'}(x,x';t) = \bar\Theta^L(x,x')\delta_{kk'}\,.$$
Moreover, $\bar\Theta^L$ can be computed recursively as follows:
\begin{equation}\label{eq:thetabar}\bar\Theta^1 = \Sigma^1\,;\qquad \bar\Theta^{\lay+1}(x, x') = \Sigma^{\lay+1}(x,x') + \E_{(\zeta,\zeta')\sim\N(0,\Sigma^\lay(x,x'))}[\dot\phi(\zeta)\dot\phi(\zeta')]\bar\Theta^\lay(x,x')\,,\end{equation} where $\dot\phi$ is the derivative of $\phi$.

During the training, the network's output obeys the dynamics
\begin{equation}\label{eq:stdNTKev}\partial_t F_k(x;t) = -\frac{1}{m}\sum_{(\bar x,\bar y)\in s}\bar\Theta^L(x,\bar x)\frac{\partial\hat\ell(F(\bar x;t),\bar y)}{\partial F_k} = -\sav{\bar\Theta^L(x,X)\frac{\partial\hat\ell(F(X),Y)}{\partial F_k}}\,.\end{equation}

\section{The regularised dynamics}\label{sec:reg}
In this section, we discuss the impact of the regularisation term on the NTK dynamics. We first particularise this to the specific case of $\ell^2$-regularisation, then move to the treatment of more general regularisers. 
\subsection[Simple l2 regularisation]{Simple $\ell^2$-regularisation}
We first consider the case $\rho \equiv \mathrm{id}$ in \eqref{eq:defR}, so that $\Err(W) = \frac{1}{2}\|\Delta W\|_F^2$ and \begin{equation}\label{eq:simplereg}\Cc_s(W) = \Ell_s(W) + \frac{\lambda}{2}\|\Delta W\|_F^2\,.\end{equation} By just applying the chain rule to $\partial_t \W(t) = -\nabla\Cc_s(\W(t))$, we find that
\begin{equation}\label{eq:simpledyn}\partial_t W^\lay_{ij}(t) = -\nabla\Cc_s(W(t)) = - \sum_{k=1}^{n_{L}}\sav{\psi^{L;\lay}_{k;ij}(X;t) \frac{\partial \hat\ell(F(X;t), Y)}{\partial F_k}} - \lambda\Delta W^{\lay}_{ij}(t)\,.\end{equation}
This translates into the following functional evolution of the network's output 
$$\partial_t F_k(x;t) = -\sum_{k'=1}^q \sav{\Theta^L_{kk'}(x, X; t)\frac{\partial \hat\ell(F(X;t), Y)}{\partial F_{k'}}}-\lambda \Xi^L_k(x;t)\,.$$

Our goal is to prove that in the infinite width limit the next two properties hold:
\begin{enumerate}
    \item The NTK is constant at its initial value, which coincides with the standard deterministic NTK in \eqref{eq:thetabar}, and more generally for all layers
    $$\Theta^\lay_{kk'}(x,x';t) = \delta_{kk'}\bar\Theta^\lay(x,x')\,;$$
    \item The term $\Xi^L$ is exactly $\Delta F$, and more generally
    $$\Xi^\lay(x;t) = \Delta U^\lay(x;t) = U^\lay(x;t)-U^\lay(x;0)\,.$$
\end{enumerate}
We note that these two properties are actually rather intuitive: the first one tells us that the Jacobian of the output stays fixed during the training, as if the dynamics where linear; the second one that we can indeed linearise $U$ around the initialisation. We recall that the standard NTK regime (with no regularisation) holds when the parameters do not move too much from their initial values, so that the dynamics can be linearised. The addition of a regularising term does actually enforce the parameters to stay close to their initial value. Hence, adding regularisation does not hinder the linearisation of the learning dynamics.  

From the two properties above, we conclude that the NTK evolution of $F$ is given by the following.
\begin{theorem}\label{thm:dyn}
In the infinite-width limit, taken recursively layer by layer, the network's output evolves as
$$\partial_t F_k(x;t) = -\frac{1}{m}\sum_{(\bar x,\bar y)\in s}\bar\Theta^L(x,\bar x)\frac{\partial\hat\ell(F(\bar x;t),\bar y)}{\partial F_k}-\lambda (F_k( x;t)-F_k( x;0))\,,$$
where $\bar\Theta^L$ is defined in \eqref{eq:thetabar}.
\end{theorem}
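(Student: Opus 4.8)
The plan is to establish the two properties stated just before the theorem — namely that in the infinite-width limit (i) $\Theta^\lay_{kk'}(x,x';t) = \delta_{kk'}\bar\Theta^\lay(x,x')$ for all $t$, and (ii) $\Xi^\lay(x;t) = \Delta U^\lay(x;t)$ — since the theorem follows immediately by plugging these into the functional evolution equation
$$\partial_t F_k(x;t) = -\sum_{k'=1}^q \sav{\Theta^L_{kk'}(x, X; t)\frac{\partial \hat\ell(F(X;t), Y)}{\partial F_{k'}}}-\lambda \Xi^L_k(x;t)$$
derived from \eqref{eq:simpledyn} by the chain rule. Both properties are proved by induction on the layer index $\lay$, with the base case $\lay=1$ being immediate: $U^1$ is linear in $W^1$, so $\psi^{1;1}_{k;ij}$ is constant (equal to $x_j/\sqrt{n_0}$, independent of the other indices beyond the obvious $\delta_{ki}$), giving $\Theta^1 = \Sigma^1 = \bar\Theta^1$ exactly and for free, and $\Xi^1_k(x;t) = \sum_{ij}\psi^{1;1}_{k;ij}\Delta W^1_{ij}(t) = \Delta U^1_k(x;t)$ by linearity.

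For the inductive step I would first control the parameter displacement. From \eqref{eq:simpledyn}, the regularisation term $-\lambda\Delta W^\lay_{ij}(t)$ acts as a restoring force, so I expect $\|\Delta W^\lay(t)\|$ to stay $O(1)$ (uniformly in $t$) rather than growing, which is exactly the lazy-training mechanism alluded to in the text; more precisely one bounds $\partial_t \tfrac12\|\Delta W\|_F^2$ and uses that the loss-gradient term, after the standard $1/\sqrt{n_\lay}$ scaling, contributes a vanishing amount to the per-parameter motion. Then, following the recursive layer-by-layer argument of \citet{jacot18}, I would show that $\psi^{\lay;\llay}_{k}(x;t)$ changes by $o(1)$ from its initial value as $n_{\lay-1}\to\infty$: the time derivative of $\psi$ involves sums over the hidden units of terms each of size $O(1/\sqrt{n})$ times bounded quantities (using that $\phi$, $\dot\phi$ are well-behaved and that $U^\lay(x;t)$ stays close to $U^\lay(x;0)$, itself controlled by the Gaussian-process limit at initialisation plus the inductive hypothesis), and the regularisation only helps by keeping $\Delta W$ bounded. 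Freezing of $\psi$ gives freezing of $\Theta^\lay$ at $\bar\Theta^\lay$ via \eqref{eq:thetabar} and the law-of-large-numbers computation of the expectations over the layer-$\lay$ units, and the off-diagonal vanishing $\Theta^\lay_{kk'}\to 0$ for $k\neq k'$ is the usual independence-of-output-coordinates argument. For property (ii), I would differentiate $\Xi^\lay_k(x;t) = \sum_{\llay\le\lay}\psi^{\lay;\llay}_k(x;t)\cdot\Delta W^\llay(t)$ in time: one term is $\sum_{\llay}\dot\psi^{\lay;\llay}_k\cdot\Delta W^\llay$, which vanishes in the limit because $\dot\psi\to 0$ and $\Delta W$ is bounded, and the other is $\sum_{\llay}\psi^{\lay;\llay}_k\cdot\partial_t W^\llay = \sum_{\llay}\psi^{\lay;\llay}_k\cdot\dpar{U^\lay_k}{?}$... — more directly, $\sum_{\llay\le\lay}\psi^{\lay;\llay}_{k}(x;t)\cdot\partial_t W^\llay(t)$ is exactly $\partial_t U^\lay_k(x;t)$ by the chain rule (the total derivative of $U^\lay_k$ along the parameter flow), so $\partial_t \Xi^\lay_k(x;t) = \partial_t U^\lay_k(x;t) + o(1)$, and since $\Xi^\lay_k(x;0)=0=\Delta U^\lay_k(x;0)$, integrating gives $\Xi^\lay = \Delta U^\lay$ in the limit.

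The main obstacle, as in all NTK proofs, is making the layer-by-layer limit rigorous: one must show that the $o(1)$ error terms accumulated in the inductive step (from the deviation of $\psi$, of $U^\lay$, and of the empirical expectations from their Gaussian limits) remain uniform in $t$ over the relevant training horizon, and that the recursive "$n_{\lay+1}/n_\lay\to 0$" limit genuinely decouples the layers. The one genuinely new ingredient relative to \citet{jacot18, lee19} is checking that the extra force $-\lambda\Delta W$ does not spoil any of these estimates — but since it is a contraction toward the initialisation it can only tighten the control on $\|\Delta W\|$, so I expect this to be the easy part and would dispatch it with a Grönwall-type bound on $\|\Delta W(t)\|_F^2$.
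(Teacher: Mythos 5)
Your plan matches the paper's argument: the paper proves Theorem~1 as the $\rho=\mathrm{id}$ instance of Theorem~2, whose proof rests on Proposition~1, established in the appendix by exactly the layer-by-layer induction you describe (base case linear, inductive step controlling $\Delta\psi$, $\Delta u_k$ and the integrals $\int\|\partial_t W\|$, $\int\|\partial_t U\|$ to freeze $\Theta^\lay$ and identify $\Xi^\lay=\Delta U^\lay$), and with $D(t)=O(1)$ obtained from $\Cc_s(t)\le\Cc_s(0)$ along the gradient flow, which is the role you assign to the Gr\"onwall-type bound. The one presentational difference is that for $\Xi^\lay=\Delta U^\lay$ the paper does not differentiate $\Xi$ and argue ``$\dot\psi\to 0$'' pointwise, but writes $\Xi^{L+1}_k-\Delta U^{L+1}_k$ as an integral of $(\psi(t)-\psi(t'))\cdot\partial_t W(t')$ and $(u_k(t)-u_k(t'))\cdot\partial_t U^L(t')$, bounding it by $\sup\|\Delta\psi\|\cdot\int\|\partial_t W\|$ etc.\ --- effectively the integration-by-parts your sketch would need to be rigorous, since $\|\Delta\psi\|=o(1)$ does not directly give $\partial_t\psi\to 0$.
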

Note the term $-\lambda (F_k(x;t)-F_k(x;0))$, which constrains the network's output to stay close to its initialisation, is not present in the standard NTK dynamics \eqref{eq:stdNTKev}.

\paragraph{Evolution of the training objective.}
We can see how the training objective $\Cc_s$ evolves during the training. First,
\begin{align*}\partial_t\Ell_s(W(t))  &=  -\sav{\nabla_F\hat\ell(F(X;t);Y)\cdot\partial_t F(X;t)} \\&= \left\langle\bar\Theta(X,X')\nabla_F\ell(F(X;t);Y)\cdot\nabla_F\ell(F(X';t);Y')\right\rangle_{s\otimes s} - \lambda\sav{\nabla_F\ell(F(X;t);Y)\cdot\Delta F(X;t)}\,,\end{align*}
where the notation $\langle g(Z,Z')\rangle_{s\otimes s}$ denotes $\frac{1}{m^2}\sum_{z\in s}\sum_{z'\in s} g(z,z')$. On the other hand, for the regularising term we have that
\begin{equation}\label{eq:Rev}\partial_t\Err(\W(t)) = -\sav{\nabla_F\hat\ell(F(X; t), Y)\cdot\Delta F(X; t)} - 2 \lambda\Err(\W(t))\,.\end{equation}
Thus, overall we get that
\begin{align*}&\partial_t\Cc_s(\W(t))  \\&\!= -\left\langle\bar\Theta(X,X')\nabla_F\ell(F(X;t);Y)\cdot\nabla_F\ell(F(X';t);Y')\right\rangle_{s\otimes s} - 2\lambda\sav{\nabla_F\ell(F(X;t);Y)\cdot\Delta F(X;t)} - 2\lambda^2\Err(W(t))\,.\end{align*}
As a side remark, we note that if $\hat\ell$ is convex in $F$, then we always have that
$$\nabla_F\hat\ell(F(x; t), y)\cdot\Delta F(x; t)\geq \Delta\hat\ell(F(x;t),y) = \hat\ell(F(x;t),y) - \hat\ell(F(x;0),y)\,.$$
In particular we get that
$$\partial_t\Err(\W(t))\leq \sav{\Delta\ell(F(x;t),Y)}-2\lambda\Err(\W(t)) = -\Delta\Cc_s(\W(t)) - \lambda\Err(\W(t))\,.$$
Since $t\mapsto\Delta\Cc_s(\W(t))$ is non-decreasing, we obtain that
$$\Err(\W(t))\leq \frac{1}{\lambda}\Big(\Cc_s(\W(0))-\Cc_s(\W(t))\Big)(1-e^{-\lambda t})\,,$$
from which it follows that $\Err(\W(t))$ can be controlled by the variation in $\Ell_s(\W(t))$ as 
$$\lambda\Err(\W(t))\leq \frac{1-e^{-\lambda t}}{2-e^{-\lambda t}}\Big(\Ell_s(\W(0))-\Ell_s(\W(t))\Big)\,.$$

\subsection{General regulariser}\label{sec:genreg}
We consider the case of a more general regularising term, which still leads to tractable training dynamics. Let $\rho:\R^+\to\R^+$ be a differentiable strictly increasing function. Define $$D(t) = \frac{1}{2}\|\Delta W(t)\|^2_F\,.$$
and let $$\Err(W(t)) = \rho(D(t))\,.$$

\begin{theorem}\label{thm:gendyn}
    Consider a function $\rho:[0,\infty)\to[0,\infty)$ as above and assume that $\hat\ell$ is locally Lipschitz. Assume that the dynamics is given by 
    $$\partial_t W(t) = -\nabla\Cc_s(W(t)) = -\nabla \Ell_s(W(t)) - \lambda \Err(W(t))\,.$$ Then, in the infinite-width limit (taken recursively layer by layer) we have 
    \begin{align*}
        \partial_t F_k(x;t) &= -\sav{\bar\Theta(x;X)\dpar{\hat\ell(F(X;t),Y)}{F_{k}}}-\lambda\rho'(D(t))\Delta F_k(x;t)\,;\\
        \partial_t D(t) &= - \sum_{k=1}^q \sav{\Delta F_k(X;t)\dpar{\hat\ell(F(X;t),Y)}{F_{k}}}-2\lambda\rho'(D(t))D(t)\,.
    \end{align*}
\end{theorem}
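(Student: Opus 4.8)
The plan is to reduce the statement to the $\ell^2$ case of \Cref{thm:dyn}. First I would apply the chain rule to $\partial_t W(t) = -\nabla\Cc_s(W(t)) = -\nabla\Ell_s(W(t)) - \lambda\nabla\Err(W(t))$, using that $\nabla_W D(t) = \Delta W(t)$ and hence $\nabla_W\Err = \rho'(D(t))\,\Delta W(t)$, which gives the parameter dynamics
\begin{equation*}
\partial_t W^\lay_{ij}(t) = -\sum_{k=1}^{n_L}\sav{\psi^{L;\lay}_{k;ij}(X;t)\,\dpar{\hat\ell(F(X;t),Y)}{F_k}} - \mu(t)\,\Delta W^\lay_{ij}(t)\,,
\end{equation*}
where I set $\mu(t) := \lambda\rho'(D(t))$, which is non-negative because $\rho$ is increasing. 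This is exactly \eqref{eq:simpledyn} with the constant $\lambda$ replaced by the time-dependent scalar $\mu(t)$. I expect the derivation of properties~1 and~2 of \Cref{sec:reg} — that, in the infinite-width limit, $\Theta^\lay_{kk'}(x,x';t)=\delta_{kk'}\bar\Theta^\lay(x,x')$ and $\Xi^\lay(x;t)=\Delta U^\lay(x;t)$ — to use $\lambda$ only through the fact that the regularising force is a bounded restoring force aimed at the initialisation, so the entire argument behind \Cref{thm:dyn} should carry over verbatim, \emph{provided} $\mu(\cdot)$ stays bounded along the trajectory.

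Establishing that bound is the only genuinely new ingredient, and I would get it from the sign of $\rho'$. Differentiating $D(t)=\tfrac12\|\Delta W(t)\|_F^2$ and inserting the dynamics above,
\begin{equation*}
\partial_t D(t) = \Delta W(t)\cdot\partial_t W(t) = -\,\Delta W(t)\cdot\nabla\Ell_s(W(t)) - 2\mu(t)D(t) \,\leq\, \|\Delta W(t)\|_F\,\|\nabla\Ell_s(W(t))\|_F\,,
\end{equation*}
where I simply discard the favourable term $-2\mu(t)D(t)\leq 0$, so that $\partial_t\sqrt{D(t)}\leq\tfrac{1}{\sqrt 2}\|\nabla\Ell_s(W(t))\|_F$. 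The right-hand side is controlled exactly as in the unregularised case: $\sum_{\llay,i,j}\psi^{L;\llay}_{k;ij}(x;t)^2 = \Theta^L_{kk}(x,x;t)$, which in the bootstrap region stays close to the bounded limit $\bar\Theta^L(x,x)$ over the compact $\X$, while $\nabla_F\hat\ell(F(x;t),y)$ stays bounded by the local-Lipschitz hypothesis since $F$ remains in a bounded set throughout the lazy dynamics. Hence $D(t)$ stays bounded on every finite interval — by no more than its unregularised counterpart — and by continuity of $\rho'$ so does $\mu(t)$. Concretely, I would fold this into the same local-in-time-plus-bootstrap scheme that proves \Cref{thm:dyn}, now carrying an extra bootstrapped bound on $D(t)$; the display above shows that this extra bound is self-improving, so the bootstrap closes and properties~1 and~2 hold in the limit.

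With properties~1 and~2 in hand, the two claimed identities follow by substituting them into relations that already hold at finite width. The chain rule gives $\partial_t F_k(x;t) = -\sum_{k'=1}^q\sav{\Theta^L_{kk'}(x,X;t)\dpar{\hat\ell(F(X;t),Y)}{F_{k'}}} - \mu(t)\,\Xi^L_k(x;t)$, and replacing $\Theta^L_{kk'}$ by $\delta_{kk'}\bar\Theta^L$ and $\Xi^L_k$ by $\Delta F_k$ produces the first equation. For the second, I would reuse the computation of $\partial_t D(t)$ above, expand $\Delta W(t)\cdot\nabla\Ell_s(W(t)) = \sum_k\sav{\Xi^L_k(X;t)\dpar{\hat\ell(F(X;t),Y)}{F_k}}$, and again replace $\Xi^L_k$ by $\Delta F_k$, which gives $\partial_t D(t) = -\sum_k\sav{\Delta F_k(X;t)\dpar{\hat\ell(F(X;t),Y)}{F_k}} - 2\lambda\rho'(D(t))D(t)$.

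The main obstacle is exactly the point flagged above: because $\rho$ is only assumed differentiable and strictly increasing, a priori $\rho'(D(t))$ — hence the effective regularisation strength $\mu(t)$ — could blow up and destroy the linearisation. The resolution is that $\mu\geq 0$ makes the regularisation a restoring force toward the initialisation, which can only shrink $\|\Delta W\|_F$, so $D(t)$ never escapes and $\rho'(D(t))$ stays finite; the one-sided estimate on $\partial_t D(t)$ that throws away the favourable regularisation term is what turns this intuition into a clean self-improving bound, with the local-Lipschitz assumption on $\hat\ell$ keeping the loss-gradient force bounded along the way.
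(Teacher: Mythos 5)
Your overall architecture matches the paper's: reduce to a parameter-level gradient flow with time-dependent regularisation strength $\mu(t)=\lambda\rho'(D(t))$, show that in the infinite-width limit the NTK stays constant and $\Xi^L=\Delta F$, then substitute into the chain-rule identities to get the two ODEs. Where you part ways with the paper — and where your argument has a real gap — is in how you bound $D(t)$ (hence $\rho'(D(t))$).

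The paper's bound is a one-line energy argument. Because $W(t)$ follows the gradient flow of $\Cc_s$, the training objective is non-increasing, so $\lambda\Err(W(t)) \leq \Cc_s(W(t)) \leq \Cc_s(W(0)) = \Ell_s(W(0))$ (using $\Ell_s\geq 0$ and $\Err(W(0))=\rho(0)=0$). With high probability over the Gaussian initialisation, $\Ell_s(W(0))=O(1)$, so $\rho(D(t))=\Err(W(t))\leq J/\lambda = O(1)$, and since $\rho$ is strictly increasing it can be inverted: $D(t)\leq\rho^{-1}(J/\lambda)=O(1)$, uniformly in $t$, with no self-referential estimate. This is the step where the hypothesis that $\rho$ is \emph{strictly} increasing (not merely $\rho'\geq 0$) is used, and it is telling that your argument never invokes it — that is a sign the paper is using a structurally different mechanism than the one you are proposing. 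Once $D(t)=O(1)$, the paper gets $\Delta F=O(1)$ from \Cref{lemma:deltaU}, hence $\nabla_F\hat\ell$ bounded by local Lipschitzness, and then applies the layer-by-layer induction of \Cref{prop:main}.

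Your route instead differentiates $D$, discards the restoring term, and bounds $\partial_t D \leq \|\Delta W\|_F\|\nabla\Ell_s(W)\|_F$. But to bound $\|\nabla\Ell_s(W(t))\|_F$ you need $\nabla_F\hat\ell(F(x;t),y)$ bounded, which requires $F(x;t)$ bounded, which (via \Cref{lemma:deltaU}) requires $D(t)=O(1)$ — exactly what you are trying to prove. You acknowledge this by invoking ``the same local-in-time-plus-bootstrap scheme that proves \Cref{thm:dyn},'' but the paper's proof of \Cref{thm:dyn}/\Cref{thm:gendyn} does not use a time bootstrap at all; it uses the Lyapunov argument above and then a layer induction. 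Your claim that ``the display above shows that this extra bound is self-improving, so the bootstrap closes'' is not actually shown: the displayed inequality does not control $\|\nabla\Ell_s\|_F$ in terms of $D(t)$ alone, and closing this circle would require a genuinely new Gr\"onwall-style argument that you have not supplied. In short, you have identified the right quantity to control and the right intuition (the regulariser is a restoring force), but you have missed the short, non-circular route the paper takes, and your substitute argument is incomplete as written.

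The rest of your proposal — rewriting the dynamics with $\mu(t)$ in place of $\lambda$, invoking properties~1 and~2 to pass to the limit, and the final two substitutions yielding the ODEs for $F_k$ and $D$ — is correct and matches the paper.
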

\begin{proof}
The proof is based on the following result, which we establish in \Cref{app:proof}. 

\begin{prop}\label{prop:main}Fix a time horizon $T>0$ and a depth $L$. Assume that for $t\in[0,T]$ and for all $\lay\in[1:L]$
$$\partial_t W^\lay_{ij}(t) = - \sum_{k=1}^{n_{L}}\sav{\psi^{L;\lay}_{k;ij}(X;t) V_k(Z;t)} - \lambda r(D(t);t)\Delta W^{\lay}_{ij}(t)\,,$$
for some mappings $V:\Z\times\R\to\R^{n_L}$ and $r:[0,\infty)^2\to\R$. Then we have that $U^L$ obeys the dynamics
$$\partial_t U_k^L(x;t) = -\sum_{k'=1}^{n_L}\sav{\Theta^L_{kk'}(x,X;t)V_{k'}(F(X;t),Y)} - \lambda r(D(t);t)\Xi^L_k(x;t)\,.$$
Moreover, if $D(t) = O(1)$ for all $t\in[0,T]$, $$\int_0^T \sav{\|V(Z;t)\|}\,\dd t = O(1)\quad\text{and}\quad\int_0^T |r(D(t);t)|\dd t = O(1)\,,$$ if $\phi$ is $\gamma_\phi$-Lipschitz and $\beta_\phi$-smooth, then (in the infinite-width limit taken starting from the layer $1$ and then going with growing index), for all $\lay\in[1:L]$, $\Theta^\lay$ is constant during the training, and $\Xi^\lay = \Delta U^\lay$. 
\end{prop}

           To prove \Cref{thm:gendyn}, we need to check that the assumptions of the above proposition hold when setting $r(D;t) = \rho'(D(t))$ and $V(z;t) = \nabla_F\ell(F(x;t), y)$. First, notice that $\Cc_s(t)\leq \Cc_s(0) = \Ell_s(0)$ for all $t\geq 0$, as we are following the gradient flow. Now, with arbitrarily high probability (on the initialisation), we can find a finite upperbound $J$ for $\Ell_s(0)$, which holds when taking the infinite-width limit (as the network output becomes Gaussian). In particular, we have that since $J$ is independent of the width and we can write $J=O(1)$ (where the $O$ notation is referred to the infinite width limit). Now, we have that $$\Err(t)\leq J/\lambda = O(1)\,.$$
        Since $\rho$ is invertible, in particular for all $t\geq 0$ we have that 
        $$D(t) \leq \rho^{-1}(J) = O(1)\,.$$
        We prove in \Cref{lemma:deltaU} (\Cref{app:proof}) that $\Delta F(t) = O(1)$, and so we know that with arbitrarily high probability we can find a radius $J'$ such that $\|F(x;t)\|\leq J'$ for all $t>0$. In particular, the regularity of $\hat\ell$ implies that $V(z;t) = \nabla_F\hat\ell(F(x;t), y)$ is uniformly bounded for all $t>0$. So, $\int_0^T\sav{\|V(Z;t)\|}\dd t = O(1)$. Finally, we have that the $r(D;t)$ in the previous statement is $\rho'(D(t))$. Since $D(t)$ is bounded throughout the training and $\rho$ is locally Lipschitz we can bound the integral over $r$ and apply \Cref{prop:main}. 

        Finally, the dynamics for $D(t)$ follows from the chain rule. 
\end{proof}
Clearly \Cref{thm:dyn} is just a particular instance of \Cref{thm:gendyn}, as we only need to set $\rho$ as the identity. 
\section{The example of least square regression}\label{sec:leastsquare}
As a simple application of what we established, we study the evolution of a network under least square regression, namely when we have $\hat\ell(\hat y, y) = \frac{1}{2}(\hat y, y)^2$. 
The dynamics of the training is given by 
$$\partial_t F(x;t) = -\sav{\bar\Theta(x,X)(F(X;t)-Y)}-\lambda(F(x;t)-F(x;0))\,.$$
This is a linear ODE and can be solved exactly. For convenience we introduce the following notation. We let $\widetilde\Theta$ denote the NTK Gram matrix whose entries are $\bar\Theta(x, x')/m$, with $x$ and $x'$ ranging among the instances of the training sample $s$. Similarly $\widetilde F(t)$ and $\widetilde Y$ are the vectors made of the network's output and labels, for the datapoints in $s$. We thus have
$$\partial_t \widetilde F(t) = -\widetilde\Theta(\widetilde F(t) - \widetilde Y) - \lambda(\widetilde F(t) -\widetilde F(0))\,,$$
which brings 
$$\widetilde F(t) = \widetilde F(0) + (\Id - e^{-tV_\lambda})V_{\lambda}^{-1}\widetilde\Theta(\widetilde Y-\widetilde F(0))\,,$$
where $V_\lambda = \lambda\Id +\widetilde\Theta$, which is always invertible for $\lambda>0$.

Note that asymptotically for large $t$ we have that
$$\widetilde F(t) \to \widetilde F(\infty) = \left(\Id-V^{-1}_\lambda\widetilde\Theta\right)\widetilde F(0) + V^{-1}_\lambda\widetilde\Theta \widetilde Y\,,$$
which is exactly what one would obtain optimising $\Cc_s$ in \eqref{eq:Cs}. Clearly, for small values of $\lambda$ the labels are almost perfectly approximated, as we have
$$\widetilde F(\infty) = \widetilde Y + \lambda\widetilde\Theta^{-1}(\Id+\lambda\widetilde\Theta^{-1})^{-1}(\widetilde F(0)-\widetilde Y) = \widetilde Y+O(\lambda)\,.$$
On the other hand, if $\lambda$ is very large then $\widetilde F(\infty)\simeq\widetilde F(0)$, as
$$\widetilde F(\infty) = \widetilde F(0) +  \frac{\widetilde\Theta}{\lambda}\left(\Id + \frac{\widetilde\Theta}{\lambda}\right)^{-1}(Y - \widetilde F(0)) = \widetilde F(0) + O(1/\lambda)\,.$$

Once the evolution of $F$ on the training datapoints has been computed, one can directly evaluate the value of $F(x;t)$ for any input that is not in the training sample. Defining
$$\tau(x;t) = \sav{\bar\Theta(x,X)(F(X;t)-Y)}\,,$$
we have
$$F(x;t) = F(x;0) + (1-e^{-\lambda t})\int_0^t\tau(x;t')e^{\lambda t'}\dd t'\,.$$
\section{An application to PAC-Bayesian training}\label{sec:pacbayes}
A motivation to study dynamics in the form of \eqref{eq:simpledyn} comes from the PAC-Bayesian literature.  We consider a dataset $s$ made of i.i.d.\ draws from a distribution $\mu$ on $\Z$. We are seeking for parameters $W$ with a small population loss $$\Ell_\Z(W) = \E_\mu[\ell(W)]\,.$$
As $\mu$ is unknown, we rely on the empirical loss $\Ell_s$ as a proxy for $\Ell_\Z$. 

The PAC-Bayesian bounds (introduced in the seminal works of \citealp{shawetaylor97,mcallester99,seeger02,maurer04,catoni04,catoni07} -- we refer to the recent surveys from \citealp{guedj19,alquier21,hellstrom2023generalisation} for a thorough introduction to PAC-Bayes) are generalisation guarantees that upperbound in high probability the population loss of stochastic architectures, in our case networks whose parameters $W$ are random variables (this means that every time that the network sees an input $x$, it draws $W$ from some distribution $Q$, and then evaluate $F(x)$ for this particular realisation of the parameters). The PAC-Bayesian bounds hold in expectation under the parameters law $Q$ (commonly referred to as \textit{posterior}). Here is a concrete example of this kind of guarantees. Fix a probability measure $P$ and $\eta>0$, for a bounded loss $\ell\in[0,1]$ we have (see, \textit{e.g.},\ \citealp{alquier21})
\begin{equation}\label{eq:PACB}\E_{W\sim Q}[\Ell_\mu(W)] \pleq \E_{W\sim Q}[\Ell_s(W)] + \frac{1}{\sqrt{8m}}\left(\eta + \frac{\KL(Q|P) + \log(1/\delta)}{\eta}\right) \,, \end{equation} where the inequality holds uniformly for every probability measure $Q$, in high probability (at least $1-\delta$) with respect to the draw of $s$. The probability measure $P$ (typically called \textit{prior}) in \eqref{eq:PACB} is arbitrary, as long as it is chosen independently of the particular dataset $s$ used for the training. 

Several studies (see \Cref{sec:intro} -- this line of work started with \citealp{langford02not}, was reignited by \citealp{dziugaite17} and then followed by a significant body of work by many authors) have proposed to train stochastic neural networks by optimising a PAC-Bayesian bound. A possible approach consists in considering the case when all the parameters of the network are independent Gaussian variables with unit variance (namely $W^\lay_{ij} \sim\N(\emme^\lay_{ij}, 1)$). The training then usually amounts to tune the means $\emme$. Typically, the initial values of the means are randomly initialised (we denote them as $\emme(0)$), and $P$ can be chosen as the distribution of the networks parameters at initialisation \citep{dziugaite17}, namely a multivariate normal with the identity as covariance matrix and $\emme(0)$ as mean vector. In this way, one gets that $$\KL(Q|P) = \frac{1}{2}\|\emme(t)-\emme(0)\|_F^2\,.$$
Defining $\bar\Ell_s(\emme)=  \E_{W\sim\N(\emme,\Id)}[\Ell_s(W)]$, we see that using the bound \eqref{eq:PACB} as training objective is equivalent to optimise
\begin{equation}\label{eq:PACobj}\Cc_s(\emme) = \bar\Ell_s(\emme) + \frac{1}{2\eta\sqrt{8m}}\|\emme-\emme(0)\|_F^2\,,\end{equation}
which is exactly in the form of \eqref{eq:simplereg} with $\lambda = 1/(\eta\sqrt{8m})$. Note that many other typical PAC-Bayesian bounds (potentially non-linear in the $\KL$ term) can also fit in our framework when considering more general regularising terms, as in \Cref{sec:genreg}.

\subsection*{Training of wide and shallow stochastic networks}
\citet{clerico23generalisation} has recently shown that, when considering the infinite-width limit for a single-hidden-layer stochastic network, a close form for $\hat\Ell_s(\emme)$ can be computed explicitly, and one can actually see the stochastic network as a deterministic one (with a different activation function), where the means $\emme$ are the trainable parameters. We summarise and rephrase the results of \citet{clerico23generalisation}, and show explicitly how exact continuous dynamics can be established via our results, in the infinite-width limit. 

We focus on a binary classification problem (\textit{i.e.}, $\Y = \{\pm 1\}$), where we assume that all the inputs $x$ are normalised so that $\|x\|=\sqrt{n_0}$ (\textit{i.e.}, $\X=S^{n_0-1}(\sqrt{n_0})$, the sphere of radius $\sqrt{n_0}$ in $\R^{n_0}$). We consider a stochastic network with a single hidden layer and one-dimensional output, and Lipschitz and smooth activation $\phi$,
$$F(x) = \frac{1}{\sqrt n}\sum_{j=1}^n W^2_j\phi(U^1_j(x))\,;\qquad U^1_i(x) = \frac{1}{\sqrt{n_0}}\sum_{j=1}^{n_0}W^1_{ij}x_j\,.$$
The prediction of the network is the sign of the output. The stochastic parameters can be rewritten as 
$$W^l = \zeta^l +\emme^l\,,$$
where $\zeta^l$ is a matrix of the same dimension of $W^l$, whose components are all independent standard normals (resampled every time that a new input is fed to the network) and $\emme^l$ is a matrix of trainable parameters (the means of $W^l$). We assume that the components of $\emme^l$ are all randomly initialised as independent draws from a standard normal distribution. In this setting, \citet{clerico23generalisation} showed that in the infinite-width limit $n\to\infty$ \begin{equation}\label{eq:Fnormal}F(x)\sim\N(M^2(x), Q^2(x))\,,\footnote{The limit is in distribution with respect to the intrinsic stochasticity of the $\zeta$'s, and in probability with respect to the random initialisation.}\end{equation} where $M^2$ and $Q^2$ are the output mean and variance in the limit.

When the training objective is in the form \eqref{eq:PACobj}, the network is constraint to remain close to its initialisation. Interestingly, in this lazy training regime $Q^2$ stays constant to its initial value, which is deterministic (with respect to the initialisation randomness) and independent of $x$ when $\X$ is a sphere. We can thus define $\sigma>0$ such that $Q^2(x;t) = \sigma^2$, for all $x\in\X$ and $t$. We refer to the \Cref{app:Q} for details.

Following the derivation of \citet{clerico23generalisation} we note that $M^2$ can actually be seen as the output of a neural network with parameters $\emme$, whose activation function is $\psi$, defined as a Gaussian convolution of $\phi$,
$$\psi(u) = \E_{\zeta\sim\N(0,1)}[\phi(\zeta+u)]\,.$$
Concretely, this means that 
$$M^2(x) = \frac{1}{\sqrt n}\emme^2\psi(M^1(x))\,;\qquad M^1(x) = \frac{1}{\sqrt{n_0}}\emme^1x\,.$$

Now, for a loss function $\hat\ell(F, z)$, we can define the expected loss
$$\bar\ell(M, z) = \E_{\zeta\sim \N(0,1)}[\hat\ell(\sigma\zeta + M, z)]\,.$$
In this way, we have that the expected empirical loss $\bar\Ell_s(\emme)$ appearing in \eqref{eq:PACobj} is given by
$$\bar\Ell_s(\emme) = \frac{1}{m}\sum_{z\in s} \bar\ell(M^2(x), y)\,.$$
Hence, optimising the PAC-Bayes bound \eqref{eq:PACB} induces the dynamics
\begin{equation}\label{eq:dynM}\partial_t M^2(x;t) = -\sav{\bar\Theta(x;X)\frac{\partial\bar\ell(M^2(X;t), Y)}{\partial M^2}} - \frac{1}{\eta\sqrt{8m}}\Delta M^2(x;t)\,,\end{equation}
where $\bar\Theta(x, x') = \nabla_\emme M^2(x;0)\cdot\nabla_\emme M^2(x';0)$ is given by
$$\bar\Theta(x,x') =\Sigma(x,x') + \frac{x\cdot x'}{n_0}\E[\dot\psi(\zeta)\dot\psi(\zeta')]\,;\qquad\Sigma(x, x') = \E[\psi(\zeta)\psi(\zeta')]\,,$$
with $(\zeta,\zeta')\sim\N\left(0,\tfrac{1}{n_0}\left(\begin{smallmatrix}n_0 & x\cdot x'\\x\cdot x' & n_0\end{smallmatrix}\right)\right)$ and $\dot\psi$ denoting the derivative of $\psi$.

\paragraph{Misclassification loss.}
A common choice is to set $\hat\ell(F, z) = 1$ if $\sign F(x)\neq y$, and $0$ otherwise, that is the so-called misclassification loss. In such a case we can easily derive that
$$\bar\ell(M, z) = \Pp_{\zeta\sim\N(0,1)}\left(\zeta>\frac{yM(x)}{\sigma}\right) = \frac{1}{2}\left(1-\erf\left(\frac{yM(x)}{\sigma\sqrt 2}\right)\right)\,.$$
It follows that \eqref{eq:dynM} now reads
$$\partial_t M^2(x;t) = \sav{Y\bar\Theta(x;X)\frac{e^{-M^2(X;t)/(2\sigma^2)}}{\sigma\sqrt{2\pi}}} - \frac{1}{\eta\sqrt{8m}}\Delta M^2(x;t)\,.$$
This does not have a simple close-form solution, and can only be solved using numerical integrators. 

\paragraph{Quadratic loss.}
In order to obtain simpler dynamics we consider the loss $\hat\ell(F, z) = (1-yF(x))^2$. Note that this quadratic loss is unbounded, and so a generalisation bound such as \eqref{eq:PACB} is not guaranteed to hold. However, the quadratic loss is always greater than the misclassification loss, so the RHS of \eqref{eq:PACB} is still a valid generalisation upperbound for the misclassification population loss. This choice of $\hat\ell$ yields the dynamics
$$\partial_t M^2(x;t) = -2\sav{\bar\Theta(x;X)(M^2(x; t) - Y)} - \frac{1}{\eta\sqrt{8m}}\Delta M^2(x;t)\,.$$
Proceeding as in \Cref{sec:leastsquare}, and again using a tilde to denote vectors and matrices indexed on the training sample $s$, we get that
$$\widetilde M^2(t) = \widetilde M^2(0) + \left(\Id - e^{-2tV_{\lambda/2}}\right)V_{\lambda/2}^{-1}\widetilde\Theta(\widetilde Y-\widetilde M^2(0))\,,$$
where $V_{\lambda/2} = \lambda\Id/2 + \widetilde\Theta$ and $\lambda = 1/(\eta\sqrt{8m})$. Asymptotically, for large $t$, $\widetilde M^2$ will approach
$$\widetilde M^2(\infty) = \widetilde M^2(0) + V_{\lambda/2}^{-1}\widetilde\Theta(\widetilde Y-\widetilde M^2(0))\,.$$

For large $t$, the empirical loss approaches 
$$\bar\Ell_\infty = \frac{1}{m}\|\widetilde M^2(0)-\widetilde Y\|^2_{(\lambda/2)^2V_{\lambda/2}^{-2}}\,,$$
where for a positive definite matrix $A$ we define $\|v\|^2_A = v^\top A v$. We note that the eigenvalues of $(\lambda/2)^2V_{\lambda/2}^{-2}$ are in the form 
$$\alpha_i = \frac{\lambda^2/4}{(\lambda/2 + \theta_i)^2}\,,$$
where the $\theta_i$'s are the eigenvalues of $\widetilde\Theta$. In practice, we can expect the largest eigenvalues of $\widetilde\Theta$ to be of order $1$ (\textit{i.e.}, $\max_i\theta_i\sim 1$) when the sample size $m$ grows to infinity \citep{murray2023characterizing}. Since $\lambda\sim 1/\sqrt m$, we get that the network will be able to reach a small empirical loss (of order $\lambda^2\sim 1/m$) if $\widetilde M^2(0)-\widetilde Y$ lies in eigenspaces of $\widetilde\Theta$ where the eigenvalues $\theta_i\sim 1\gg 1/\sqrt m$. 

On the other hand, for large $t$ the regularising term $\Err$ will approach $\Err_\infty$, which from \eqref{eq:Rev} must satisfy
$$\frac{2}{m}\Delta\widetilde M(\infty)\cdot(\widetilde M(\infty)-\widetilde Y) = -2\lambda\Err_\infty\,.$$
From this we can derive that
$$\Err_\infty = \frac{1}{2m}\|\widetilde M^2(0)-\widetilde Y\|^2_{V^{-2}_{\lambda/2}\widetilde\Theta}\,.$$
Here, the eigenvalues of $V^{-2}_{\lambda/2}\widetilde\Theta$ are of the form 
$$\beta_i = \frac{\theta_i}{(\lambda/2 + \theta_i)^2}\,.$$
If we are in the regime where the datapoints are such that $\widetilde M^2(0)-\widetilde Y$ again lies where $\theta_i\sim 1$, then we can expect $\Err_\infty\sim 1$. This means that if we are able to learn well the labels while optimising the PAC-Bayesian bound, we are ensured that the $\KL$ term is of order $1$, and so the objective \eqref{eq:PACobj} will be of order $1/\sqrt{m}$, resulting in a non-vacuous bound. We argue that this is what happens when data comes from a \textit{reasonable} underlying distribution, matching the implicit regularisation induced by the NTK. 

We finally note that the $\beta_i$'s are small also when $\theta_i\ll 1/\sqrt m$. However this is due to the fact that these directions are not promoted by the NTK dynamics and so if $\widetilde M^2(0)-\widetilde Y$ completely lies in eigenspaces with very small eigenvalues of $\widetilde\Theta$, then the network will essentially stay fixed to its initial configuration, keeping a small penalty, but also not improving its performance.

\section{Conclusion}\label{sec:conclusion}
We established explicit dynamics for infinitely wide fully connected networks trained to optimise a regularised objective, where the regularisation pushes the parameters to stay close to their initialisation. Under this regime we show that the model undergoes linearised dynamics during the training, that appears to be a regularised version of the standard NTK evolution.

Our analysis follows similar ideas to the NTK convergence proof of \citet{jacot18} and presents the first regularised NTK analysis that can also be applied to PAC-Bayesian training. We conjecture that stronger follow-up results could be derived, for instance following the approach of \citet{lee19} to show that the convergence holds when the infinite width limit is taken for all the hidden layers simultaneously and to study discretised dynamics.

We also anticipate that further analytical and empirical studies of the induced PAC-Bayesian dynamics might be of interest to shed some light on generalisation-driven training of neural networks.
\subsubsection*{Acknowledgements}
E.C.\ acknowledges partial support from the U.K. Engineering and Physical Sciences Research
Council (EPSRC) through the grant EP/R513295/1 (DTP scheme), from the Magdalen College Oxford travel grant, and from the European Research Council (ERC) under the European Union’s Horizon 2020
research and innovation programme (Grant agreement No. 950180).\\
B.G.\ acknowledges partial support from the U.K. Engineering and Physical Sciences Research Council (EPSRC) under grant number EP/R013616/1, and partial support from the French National Agency for Research, grants ANR-18-CE40-0016-01 and ANR-18-CE23-0015-02.
\bibliography{main}

\begin{thebibliography}{59}
\providecommand{\natexlab}[1]{#1}
\providecommand{\url}[1]{\texttt{#1}}
\expandafter\ifx\csname urlstyle\endcsname\relax
  \providecommand{\doi}[1]{doi: #1}\else
  \providecommand{\doi}{doi: \begingroup \urlstyle{rm}\Url}\fi

\bibitem[Allen-Zhu et~al.(2019{\natexlab{a}})Allen-Zhu, Li, and Liang]{allenzhu2019learning}
Z.~Allen-Zhu, Y.~Li, and Y.~Liang.
\newblock Learning and generalization in overparameterized neural networks, going beyond two layers.
\newblock \emph{NeurIPS}, 2019{\natexlab{a}}.

\bibitem[Allen-Zhu et~al.(2019{\natexlab{b}})Allen-Zhu, Li, and Song]{allenzhu2019convergence}
Z.~Allen-Zhu, Y.~Li, and Z.~Song.
\newblock A convergence theory for deep learning via over-parameterization.
\newblock \emph{ICML}, 2019{\natexlab{b}}.

\bibitem[Alquier(2021)]{alquier21}
P.~Alquier.
\newblock User-friendly introduction to {PAC}-{B}ayes bounds.
\newblock \emph{arXiv:2110.11216}, 2021.

\bibitem[Alquier et~al.(2016)Alquier, Ridgway, and Chopin]{alquier16}
P.~Alquier, J.~Ridgway, and N.~Chopin.
\newblock On the properties of variational approximations of {G}ibbs posteriors.
\newblock \emph{Journal of Machine Learning Research}, 17, 2016.

\bibitem[Antognini(2019)]{antognini19}
J.M. Antognini.
\newblock Finite size corrections for neural network {G}aussian processes.
\newblock \emph{ICML Workshop}, 2019.

\bibitem[Arora et~al.(2019{\natexlab{a}})Arora, Du, Hu, Li, Salakhutdinov, and Wang]{arora19}
S.~Arora, S.~Du, W.~Hu, Z.~Li, R.~Salakhutdinov, and R.~Wang.
\newblock On exact computation with an infinitely wide neural net.
\newblock \emph{NeurIPS}, 2019{\natexlab{a}}.

\bibitem[Arora et~al.(2019{\natexlab{b}})Arora, Du, Hu, Li, and Wang]{arora2019fine}
S.~Arora, S.~Du, W.~Hu, Z.~Li, and R.~Wang.
\newblock Fine-grained analysis of optimization and generalization for overparameterized two-layer neural networks.
\newblock \emph{ICML}, 2019{\natexlab{b}}.

\bibitem[Basteri \& Trevisan(2022)Basteri and Trevisan]{basteri22}
A.~Basteri and D.~Trevisan.
\newblock Quantitative {G}aussian approximation of randomly initialized deep neural networks.
\newblock \emph{arXiv:2203.07379}, 2022.

\bibitem[Biggs \& Guedj(2021)Biggs and Guedj]{biggs21}
F.~Biggs and B.~Guedj.
\newblock Differentiable {PAC}-{B}ayes objectives with partially aggregated neural networks.
\newblock \emph{Entropy}, 23\penalty0 (10), 2021.

\bibitem[Biggs \& Guedj(2022)Biggs and Guedj]{biggs22shallow}
F.~Biggs and B.~Guedj.
\newblock Non-vacuous generalisation bounds for shallow neural networks.
\newblock \emph{ICML}, 2022.

\bibitem[Boucheron et~al.(2013)Boucheron, Lugosi, and Massart]{boucheron13}
S.~Boucheron, G.~Lugosi, and P.~Massart.
\newblock \emph{Concentration Inequalities - {A} Nonasymptotic Theory of Independence}.
\newblock Oxford University Press, 2013.

\bibitem[Cao \& Gu(2019)Cao and Gu]{cao2019generalization}
Y.~Cao and Q.~Gu.
\newblock Generalization error bounds of gradient descent for learning over-parameterized deep relu networks.
\newblock \emph{AAAI Conference on Artificial Intelligence}, 2019.

\bibitem[Catoni(2004)]{catoni04}
O.~Catoni.
\newblock \emph{Statistical Learning Theory and Stochastic Optimization}.
\newblock Ecole d'Eté de Probabilités de Saint-Flour. Springer, 2004.

\bibitem[Catoni(2007)]{catoni07}
O.~Catoni.
\newblock {PAC}-{B}ayesian supervised classification: The thermodynamics of statistical learning.
\newblock \emph{IMS Lecture Notes Monograph Series}, 2007.

\bibitem[Chen et~al.(2020)Chen, Cao, Gu, and Zhang]{chen2020generalized}
Z.~Chen, Y.~Cao, Q.~Gu, and T.~Zhang.
\newblock A generalized neural tangent kernel analysis for two-layer neural networks.
\newblock \emph{NeurIPS}, 2020.

\bibitem[Ch\'erief-Abdellatif et~al.(2022)Ch\'erief-Abdellatif, Shi, Doucet, and Guedj]{cherief2021vae}
B.E. Ch\'erief-Abdellatif, Y.~Shi, A.~Doucet, and B.~Guedj.
\newblock On {PAC-Bayesian} reconstruction guarantees for {VAEs}.
\newblock \emph{AISTATS}, 2022.

\bibitem[Chizat \& Bach(2018)Chizat and Bach]{chizat2018global}
L.~Chizat and F.~Bach.
\newblock On the global convergence of gradient descent for over-parameterized models using optimal transport.
\newblock \emph{NeurIPS}, 2018.

\bibitem[Chizat et~al.(2019)Chizat, Oyallon, and Bach]{chizat2019lazy}
L.~Chizat, E.~Oyallon, and F.~Bach.
\newblock On lazy training in differentiable programming.
\newblock \emph{NeurIPS}, 2019.

\bibitem[Clerico et~al.(2022)Clerico, Deligiannidis, and Doucet]{clerico22conditionally}
E.~Clerico, G.~Deligiannidis, and A.~Doucet.
\newblock Conditionally {G}aussian {PAC-B}ayes.
\newblock \emph{AISTATS}, 2022.

\bibitem[Clerico et~al.(2023{\natexlab{a}})Clerico, Deligiannidis, and Doucet]{clerico23wide}
E.~Clerico, G.~Deligiannidis, and A.~Doucet.
\newblock Wide stochastic networks: {G}aussian limit and {PAC-B}ayesian training.
\newblock \emph{ALT}, 2023{\natexlab{a}}.

\bibitem[Clerico et~al.(2023{\natexlab{b}})Clerico, Farghly, Deligiannidis, Guedj, and Doucet]{clerico23generalisation}
E.~Clerico, T.~Farghly, G.~Deligiannidis, B.~Guedj, and A.~Doucet.
\newblock Generalisation under gradient descent via deterministic {PAC-B}ayes.
\newblock \emph{arXiv:2209.02525}, 2023{\natexlab{b}}.

\bibitem[Du et~al.(2019)Du, Lee, Li, Wang, and Zhai]{du2019gradient}
S.S. Du, J.D. Lee, H.~Li, L.~Wang, and X.~Zhai.
\newblock Gradient descent finds global minima of deep neural networks.
\newblock \emph{ICML}, 2019.

\bibitem[Dziugaite \& Roy(2017)Dziugaite and Roy]{dziugaite17}
G.K. Dziugaite and D.M. Roy.
\newblock Computing nonvacuous generalization bounds for deep (stochastic) neural networks with many more parameters than training data.
\newblock \emph{UAI}, 2017.

\bibitem[Dziugaite et~al.(2021)Dziugaite, Hsu, Gharbieh, Arpino, and Roy]{dziugaite21role}
G.K. Dziugaite, K.~Hsu, W.~Gharbieh, G.~Arpino, and D.M. Roy.
\newblock On the role of data in {PAC-B}ayes bounds.
\newblock \emph{AISTATS}, 2021.

\bibitem[Fang et~al.(2019)Fang, Gu, Zhang, and Zhang]{fang2019convex}
C.~Fang, Y.~Gu, W.~Zhang, and T.~Zhang.
\newblock Convex formulation of overparameterized deep neural networks.
\newblock \emph{arXiv:1911.07626}, 2019.

\bibitem[Garriga-Alonso et~al.(2019)Garriga-Alonso, Rasmussen, and Aitchison]{garriga19}
A.~Garriga-Alonso, C.E. Rasmussen, and L.~Aitchison.
\newblock Deep convolutional networks as shallow gaussian processes.
\newblock \emph{ICLR}, 2019.

\bibitem[Guedj(2019)]{guedj19}
B.~Guedj.
\newblock A primer on {PAC}-{B}ayesian learning.
\newblock \emph{Second congress of the French Mathematical Society}, 2019.

\bibitem[Hayou et~al.(2021)Hayou, Clerico, He, Deligiannidis, Doucet, and Rousseau]{hayou21}
S.~Hayou, E.~Clerico, B.~He, G.~Deligiannidis, A.~Doucet, and J.~Rousseau.
\newblock Stable {ResNet}.
\newblock \emph{AISTATS}, 2021.

\bibitem[Hellström et~al.(2023)Hellström, Durisi, Guedj, and Raginsky]{hellstrom2023generalisation}
F.~Hellström, G.~Durisi, B.~Guedj, and M.~Raginsky.
\newblock Generalization bounds: Perspectives from information theory and {PAC-Bayes}.
\newblock \emph{arXiv:2309.04381}, 2023.

\bibitem[Huang et~al.(2022)Huang, Liu, Chen, Liu, and Da~X.]{huang22}
W.~Huang, C.~Liu, Y.~Chen, T.~Liu, and Richard~Y. Da~X.
\newblock Demystify optimization and generalization of over-parameterized {PAC-B}ayesian learning.
\newblock \emph{arXiv:2202.01958}, 2022.

\bibitem[Jacot et~al.(2018)Jacot, Gabriel, and Hongler]{jacot18}
A.~Jacot, F.~Gabriel, and C.~Hongler.
\newblock Neural tangent kernel: convergence and generalization in neural networks.
\newblock \emph{NeurIPS}, 2018.

\bibitem[Langford \& Caruana(2001)Langford and Caruana]{langford02not}
J.~Langford and R.~Caruana.
\newblock ({N}ot) bounding the true error.
\newblock \emph{NeurIPS}, 2001.

\bibitem[Lee et~al.(2018)Lee, Bahri, Novak, Schoenholz, Pennington, and Sohl-Dickstein]{lee18}
J.~Lee, Y.~Bahri, R.~Novak, S.~Schoenholz, J.~Pennington, and J.~Sohl-Dickstein.
\newblock Deep neural networks as {G}aussian processes.
\newblock \emph{ICLR}, 2018.

\bibitem[Lee et~al.(2019)Lee, Xiao, Schoenholz, Bahri, Novak, Sohl-Dickstein, and Pennington]{lee19}
J.~Lee, L.~Xiao, S.~Schoenholz, Y.~Bahri, R.~Novak, J.~Sohl-Dickstein, and J.~Pennington.
\newblock Wide neural networks of any depth evolve as linear models under gradient descent.
\newblock \emph{NeurIPS}, 2019.

\bibitem[Lee et~al.(2020)Lee, Schoenholz, Pennington, Adlam, Xiao, Novak, and Sohl-Dickstein]{lee20}
J.~Lee, S.~Schoenholz, J.~Pennington, B.~Adlam, L.~Xiao, R.~Novak, and J.~Sohl-Dickstein.
\newblock Finite versus infinite neural networks: an empirical study.
\newblock \emph{NeurIPS}, 2020.

\bibitem[Letarte et~al.(2019)Letarte, Germain, Guedj, and Laviolette]{letarte19dichotomize}
G.~Letarte, P.~Germain, B.~Guedj, and F.~Laviolette.
\newblock Dichotomize and generalize: {PAC-Bayesian} binary activated deep neural networks.
\newblock \emph{NeurIPS}, 2019.

\bibitem[Lotfi et~al.(2022)Lotfi, Finzi, Kapoor, Potapczynski, Goldblum, and Wilson]{lotfi22pac}
S.~Lotfi, M.~Finzi, S.~Kapoor, A.~Potapczynski, M.~Goldblum, and A.~Gordon Wilson.
\newblock {PAC-B}ayes compression bounds so tight that they can explain generalization.
\newblock \emph{NeurIPS}, 2022.

\bibitem[Maurer(2004)]{maurer04}
A.~Maurer.
\newblock A note on the {PAC} {B}ayesian theorem.
\newblock \emph{arXiv:0411099}, 2004.

\bibitem[McAllester(1999)]{mcallester99}
D.A. McAllester.
\newblock {PAC-Bayesian model averaging}.
\newblock \emph{COLT}, 1999.

\bibitem[Mei et~al.(2018)Mei, Montanari, and Nguyen]{mei2018mean}
S.~Mei, A.~Montanari, and P.M. Nguyen.
\newblock A mean field view of the landscape of two-layer neural networks.
\newblock \emph{Proceedings of the National Academy of Sciences}, 115, 2018.

\bibitem[Mei et~al.(2019)Mei, Misiakiewicz, and Montanari]{mei2019meanfield}
S.~Mei, T.~Misiakiewicz, and A.~Montanari.
\newblock Mean-field theory of two-layers neural networks: dimension-free bounds and kernel limit.
\newblock \emph{COLT}, 2019.

\bibitem[Murray et~al.(2023)Murray, Jin, Bowman, and Montufar]{murray2023characterizing}
M.~Murray, H.~Jin, B.~Bowman, and G.~Montufar.
\newblock Characterizing the spectrum of the {NTK} via a power series expansion.
\newblock \emph{ICRL}, 2023.

\bibitem[Nagarajan \& Kolter(2019)Nagarajan and Kolter]{nagarajan19}
V.~Nagarajan and J.~Zico Kolter.
\newblock Deterministic {PAC-B}ayesian generalization bounds for deep networks via generalizing noise-resilience.
\newblock \emph{ICLR}, 2019.

\bibitem[Neal(1995)]{neal95}
R.M. Neal.
\newblock Bayesian learning for neural networks.
\newblock \emph{Springer Science \& Business Media}, 118, 1995.

\bibitem[Neyshabur et~al.(2018)Neyshabur, Bhojanapalli, and Srebro]{neyshabur18}
B.~Neyshabur, S.~Bhojanapalli, and N.~Srebro.
\newblock A {PAC-B}ayesian approach to spectrally-normalized margin bounds for neural networks.
\newblock \emph{ICLR}, 2018.

\bibitem[Novak et~al.(2019)Novak, Xiao, Lee, Bahri, Yang, Hron, Abolafia, Pennington, and Sohl-Dickstein]{novak19}
R.~Novak, L.~Xiao, J.~Lee, Y.~Bahri, G.~Yang, J.~Hron, D.~A Abolafia, J.~Pennington, and J.~Sohl-Dickstein.
\newblock Bayesian deep convolutional networks with many channels are gaussian processes.
\newblock \emph{ICLR}, 2019.

\bibitem[Nozawa et~al.(2020)Nozawa, Germain, and Guedj]{nozawa2019pacbayesian}
K.~Nozawa, P.~Germain, and B.~Guedj.
\newblock {PAC-Bayesian} contrastive unsupervised representation learning.
\newblock \emph{UAI}, 2020.

\bibitem[P\'erez-Ortiz et~al.(2021)P\'erez-Ortiz, Rivasplata, Parrado-Hernandez, Guedj, and Shawe-Taylor]{perezortiz2021progress}
M.~P\'erez-Ortiz, O.~Rivasplata, E.~Parrado-Hernandez, B.~Guedj, and J.~Shawe-Taylor.
\newblock Progress in self-certified neural networks.
\newblock \emph{NeurIPS workshop on Bayesian Deep Learning}, 2021.

\bibitem[Pérez-Ortiz et~al.(2021{\natexlab{a}})Pérez-Ortiz, Risvaplata, Shawe-Taylor, and Szepesvári]{perezortiz21tighter}
M.~Pérez-Ortiz, O.~Risvaplata, J.~Shawe-Taylor, and C.~Szepesvári.
\newblock Tighter risk certificates for neural networks.
\newblock \emph{Journal of Machine Learning Research}, 22, 2021{\natexlab{a}}.

\bibitem[Pérez-Ortiz et~al.(2021{\natexlab{b}})Pérez-Ortiz, Rivasplata, Guedj, Gleeson, Zhang, Shawe-Taylor, Bober, and Kittler]{perezortiz21learning}
M.~Pérez-Ortiz, O.~Rivasplata, B.~Guedj, M.~Gleeson, J.~Zhang, J.~Shawe-Taylor, M.~Bober, and J.~Kittler.
\newblock Learning {PAC}-{B}ayes priors for probabilistic neural networks.
\newblock \emph{arXiv:2109.10304}, 2021{\natexlab{b}}.

\bibitem[Seeger(2002)]{seeger02}
M.~Seeger.
\newblock {PAC-Bayesian Generalization Error Bounds for Gaussian Process Classification}.
\newblock \emph{Journal of Machine Learning Research}, 3, 2002.

\bibitem[Shawe-Taylor \& Williamson(1997)Shawe-Taylor and Williamson]{shawetaylor97}
J.~Shawe-Taylor and R.C. Williamson.
\newblock {A PAC analysis of a Bayesian estimator}.
\newblock \emph{COLT}, 1997.

\bibitem[Tinsi \& Dalalyan(2022)Tinsi and Dalalyan]{tinsi22}
L.~Tinsi and A.S. Dalalyan.
\newblock Risk bounds for aggregated shallow neural networks using gaussian priors.
\newblock \emph{COLT}, 2022.

\bibitem[Vershynin(2012)]{vershynin_2012}
R.~Vershynin.
\newblock \emph{Introduction to the non-asymptotic analysis of random matrices}, chapter 5 in Compressed Sensing: Theory and Applications, pp.\  210–268.
\newblock Cambridge University Press, 2012.

\bibitem[Viallard et~al.(2023)Viallard, Haddouche, Şimşekli, and Guedj]{viallard2023wasserstein}
P.~Viallard, M.~Haddouche, U.~Şimşekli, and B.~Guedj.
\newblock Learning via {Wasserstein}-based high probability generalisation bounds.
\newblock \emph{NeurIPS}, 2023.

\bibitem[Wei et~al.(2019)Wei, Lee, Liu, and Ma]{wei2020regularization}
C.~Wei, J.D. Lee, Q.~Liu, and T.~Ma.
\newblock Regularization matters: Generalization and optimization of neural nets v.s. their induced kernel.
\newblock \emph{NeurIPS}, 2019.

\bibitem[Yang(2019)]{yang19scaling}
G.~Yang.
\newblock Scaling limits of wide neural networks with weight sharing: Gaussian process behavior, gradient independence, and neural tangent kernel derivation.
\newblock \emph{arXiv:1902.04760}, 2019.

\bibitem[Yang \& Hu(2022)Yang and Hu]{yang2022feature}
G.~Yang and E.J. Hu.
\newblock Feature learning in infinite-width neural networks.
\newblock \emph{arXiv:2011.14522}, 2022.

\bibitem[Zhou et~al.(2019)Zhou, Veitch, Austern, Adams, and Orbanz]{zhou18}
W.~Zhou, V.~Veitch, M.~Austern, R.P. Adams, and P.~Orbanz.
\newblock Non-vacuous generalization bounds at the {ImageNet} scale: a {PAC-B}ayesian compression approach.
\newblock \emph{ICLR}, 2019.

\end{thebibliography}
\bibliographystyle{tmlr}

\newpage

\appendix
\section{Appendix}\label{app}
\subsection[Proof of Proposition 1]{Proof of \Cref{prop:main}}\label{app:proof}
\begin{lemma}\label{lemma:deltaU}
    Assume that, when the infinite-width limit is taken recursively layer by layer, $D(t) = O(1)$. Assume that $\phi$ is $\gamma_\phi$-Lipschitz, then we have that for each layer $l$
    $$\|\Delta U^{l}(x;t)\| = O(1)\,.$$
\end{lemma}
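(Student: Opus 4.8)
The plan is to prove the estimate by induction on the layer index $\lay$, the key point being that the hypothesis $D(t) = \tfrac12\|\Delta W(t)\|_F^2 = O(1)$ bounds the \emph{total} squared displacement of all the network weights; so, although the vector $U^\lay(x;t)$ has $n_\lay$ coordinates, its displacement $\Delta U^\lay(x;t)$ is spread over all of them in such a way that the aggregate norm $\|\Delta U^\lay(x;t)\|$ stays bounded.

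For the base case $\lay=1$, I would write $\Delta U^1_i(x;t) = \tfrac{1}{\sqrt{n_0}}\sum_j \Delta W^1_{ij}(t)\,x_j$, apply Cauchy--Schwarz in $j$ and sum over $i$, getting $\|\Delta U^1(x;t)\|^2 \le \tfrac{\|x\|^2}{n_0}\|\Delta W^1(t)\|_F^2 \le \tfrac{2\|x\|^2}{n_0}\,D(t)$; since $\X$ is compact and $n_0$ is fixed, this is $O(D(t)) = O(1)$.

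For the inductive step, assuming $\|\Delta U^\lay(x;t)\| = O(1)$, I would decompose $\Delta U^{\lay+1}_i(x;t) = A_i + B_i + C_i$ with $A_i = \tfrac{1}{\sqrt{n_\lay}}\sum_j \Delta W^{\lay+1}_{ij}(t)\,\phi(U^\lay_j(x;0))$ (only the outgoing weights have moved), $B_i = \tfrac{1}{\sqrt{n_\lay}}\sum_j W^{\lay+1}_{ij}(0)\,(\phi(U^\lay_j(x;t))-\phi(U^\lay_j(x;0)))$ (only the incoming signal has moved), and $C_i = \tfrac{1}{\sqrt{n_\lay}}\sum_j \Delta W^{\lay+1}_{ij}(t)\,(\phi(U^\lay_j(x;t))-\phi(U^\lay_j(x;0)))$ (the cross term), and bound the three pieces separately. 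For $A$, Cauchy--Schwarz gives $\|A\|^2 \le \tfrac{1}{n_\lay}\|\Delta W^{\lay+1}(t)\|_F^2\sum_j\phi(U^\lay_j(x;0))^2 \le \tfrac{2D(t)}{n_\lay}\sum_j\phi(U^\lay_j(x;0))^2$, and since in the recursive infinite-width limit the $U^\lay_j(x;0)$ are i.i.d.\ $\N(0,\Sigma^\lay(x,x))$ (with $\Sigma^\lay$ as in \eqref{eq:sigmarec}), the quantity $\tfrac{1}{n_\lay}\sum_j\phi(U^\lay_j(x;0))^2$ concentrates around $\E_{\zeta\sim\N(0,\Sigma^\lay(x,x))}[\phi(\zeta)^2] < \infty$ (the expectation being finite because a $\gamma_\phi$-Lipschitz $\phi$ has at most linear growth), so $\|A\|^2 = O(1)$. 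For $C$, Cauchy--Schwarz together with the Lipschitz estimate $\|\phi(U^\lay(x;t))-\phi(U^\lay(x;0))\| \le \gamma_\phi\|\Delta U^\lay(x;t)\|$ gives $\|C\|^2 \le \tfrac{2\gamma_\phi^2 D(t)}{n_\lay}\|\Delta U^\lay(x;t)\|^2 = O(1/n_\lay) = o(1)$. For $B$, for each of the finitely many indices $i$ I would combine Cauchy--Schwarz with $\sum_j (W^{\lay+1}_{ij}(0))^2 = n_\lay(1+o(1))$ (law of large numbers, the initial entries being i.i.d.\ standard normals) and the same Lipschitz estimate to get $|B_i| \le \gamma_\phi\|\Delta U^\lay(x;t)\|\,(1+o(1)) = O(1)$, hence $\|B\| = O(1)$. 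The triangle inequality then yields $\|\Delta U^{\lay+1}(x;t)\| \le \|A\| + \|B\| + \|C\| = O(1)$.

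The step I expect to be the main obstacle is keeping the recursive order of the width limits straight: every $O(1)$ and $o(1)$ above must be read in the regime where $n_1,\dots,n_{\lay-1}$ have already been sent to infinity (so that the pre-activations $U^\lay(\cdot\,;0)$ genuinely enjoy the Gaussian-process description and are i.i.d.\ across the width index, and so that the handling of $B$ over the ``finitely many'' coordinates $i$ is legitimate), while $n_\lay$ is taken to infinity at the current step; one must also check that the ``with arbitrarily high probability on the initialisation'' events compose consistently across the $L$ layers. Once $D(t) = O(1)$ is granted, the displacement estimates themselves are elementary applications of Cauchy--Schwarz and the Lipschitz property of $\phi$.
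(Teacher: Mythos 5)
Your decomposition of $\Delta U^{\lay+1}$ into $A + B + C$ is exactly the one the paper uses, and your treatment of $A$ (Cauchy--Schwarz plus concentration of $\tfrac{1}{n_\lay}\sum_j\phi(U^\lay_j(x;0))^2$) and of $C$ (Cauchy--Schwarz plus Lipschitzness, giving $O(1/\sqrt{n_\lay})$) match what the paper does. The gap is in your bound for $B$, and it is a real one. Your row-wise estimate gives, for each fixed $i$, $|B_i| \le \tfrac{\gamma_\phi}{\sqrt{n_\lay}}\|W^{\lay+1}_{i\cdot}(0)\|\,\|\Delta U^\lay(x;t)\| = O(1)$, and you then conclude $\|B\|=O(1)$ by appealing to there being ``finitely many indices $i$.'' But $i$ ranges over $[1:n_{\lay+1}]$, and $n_{\lay+1}$ is also sent to infinity in the recursive limit (it is only at the \emph{current} step of the recursion that it is still finite). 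Summing your per-coordinate bound yields $\|B\|^2 = \sum_i |B_i|^2 = O(n_{\lay+1})$, i.e.\ $\|B\| = O(\sqrt{n_{\lay+1}})$, which is not uniformly $O(1)$; equivalently, your argument is really the Frobenius-norm bound $\tfrac{1}{\sqrt{n_\lay}}\|W^{\lay+1}(0)\|_F = \sqrt{n_{\lay+1}}(1+o(1))$, which is too crude. This matters because the inductive hypothesis you invoke at the next stage, $\|\Delta U^{\lay+1}\| = O(1)$ \emph{as $n_{\lay+1}\to\infty$}, is precisely what your bound on $B$ fails to deliver; carried through the layers it degrades to $\|\Delta U^L\| = O(\sqrt{n_2\cdots n_{L}})$.

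The paper closes this exact gap by replacing the per-row (equivalently Frobenius) estimate with the much sharper operator-norm bound for i.i.d.\ Gaussian matrices, $\|W^{\lay+1}(0)\|_{\mathrm{op}} = O(\sqrt{n_\lay}+\sqrt{n_{\lay+1}})$ (citing \citealp{vershynin_2012}), so that
$\|B\| \le \tfrac{\gamma_\phi}{\sqrt{n_\lay}}\|W^{\lay+1}(0)\|_{\mathrm{op}}\|\Delta U^\lay(x;t)\| = O\bigl(1+\sqrt{n_{\lay+1}/n_\lay}\bigr)\|\Delta U^\lay(x;t)\|$,
and then invokes the recursive-limit convention $n_{\lay+1}/n_\lay\to 0$ stated in Section~\ref{sec:notation} to get $\|B\|=O(1)$ and close the induction. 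That random-matrix input (and the ratio condition it is paired with) is the missing ingredient in your argument; without it the per-coordinate estimate cannot be aggregated into an $O(1)$ bound on the vector norm.
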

\begin{proof}
        We have that $$\|\Delta U^1(x;t)\| \leq \frac{1}{\sqrt n_0}\|\Delta W^0(t)\|\|x\| = O(1)\,.$$
        Then, recall that at initialisation the components of $U^l(x, 0)$ are independent normal distributions, with mean $0$ and variance $\Sigma^l(x,x)$, as defined in \eqref{eq:sigmarec}. So, we have that all the $\phi(U^l_i(x;0))^2$'s are independent and equally distributed, with finite variance (thanks to the Lipschitzness of $\phi$). By the standard CLT we get that
        $$\frac{1}{n_l}\|\phi(U^l(x;0))\|^2 = \frac{1}{n_l}\sum_{i=1}^{n_l}\phi(U^l_i(x;0))^2 \sim \frac{1}{n_l}\sum_{i=1}^{n_l}\E[\phi(U^l_i(x;0))^2]= O(1)\,,$$
        and so $\|\phi(U^l(x;0))\|=O(\sqrt{n_l})$. Now, using the Lipschitzness of $\phi$ we get
        \begin{align*}\|\Delta U^{l+1}(x;t)\|&\leq \frac{1}{\sqrt{n_l}}\left(\|W^{l+1}(0)\|\gamma_\phi\|\Delta U^l(x;t)\| + \|\Delta W^{l+1}(t)\|\|\phi(U^l(x;0))\| + \gamma_\phi\|\Delta U^l(x;t)\|\|\Delta W^{l+1}(t)\|\right)\\&=O\left(1 +  \left(1+\sqrt{\frac{n_{l+1}}{n_l}}\right)\|\Delta U^l(x;t)\|\right)\,,\end{align*}
        where we used that $\|W^{l+1}(0)\| = O(\sqrt{n_l}+\sqrt{n_{l+1}})$, a classical result in random matrix theory \citep{vershynin_2012}.
        Assuming that the limit is taken layer after layer, we have that $n_{l+1}/n_l\to 0$ and so 
        $$\|\Delta U^{l+1}(x;t)\| = O(1)$$
        by induction.
\end{proof}
For the rest of this section, we define $D^l(t)$ as
$$D^l(t) = \frac{1}{2}\sum_{l'=1}^l\|\W^{l'}(t)-\W^{l'}(0)\|^2_F\,.$$
Now, we restate and prove Proposition \ref{prop:main}.
\begin{manualprop}{\Cref{prop:main}}
Fix a time horizon $T>0$ and a depth $L$. Assume that for $t\in[0,T]$ and for all $\lay\in[1:L]$
$$\partial_t W^\lay_{ij}(t) = - \sum_{k=1}^{n_{L}}\sav{\psi^{L;\lay}_{k;ij}(X;t) V_k(Z;t)} - \lambda r(D^L(t);t)\Delta W^{\lay}_{ij}(t)\,,$$
for any mappings $V:\Z\times\R\to\R^{n_L}$ and $r:[0,\infty)^2\to\R$. Then we have that $U^L$ obeys the dynamics
$$\partial_t U_k^L(x;t) = -\sum_{k'=1}^{n_L}\sav{\Theta^L_{kk'}(x,X;t)V_{k'}(F(X;t),Y)} - \lambda r(D^L(t);t)\Xi^L_k(x;t)\,.$$
Moreover, if $D^L(t) = O(1)$ for all $t\in[0,T]$, $$\int_0^T \sav{\|V(Z;t)\|}\,\dd t = O(1)\quad\text{and}\quad\int_0^T |r(D^L(t);t)|\dd t = O(1)\,,$$ if $\phi$ is $\gamma_\phi$-Lipschitz and $\beta_\phi$-smooth, then (in the infinite-width limit taken starting from the layer $1$ and then going with growing index), for all $\lay\in[1:L]$, $\Theta^\lay$ is constant during the training, and $\Xi^\lay = \Delta U^\lay$.
\end{manualprop}
\begin{proof}
    The first statement follows directly from the chain rule. For the second statement, we proceed by induction, taking inspiration in the original NTK proof of \citet{jacot18}. For $L=1$ the model is linear and the statement holds. Now, assume that the statement holds for networks of depth $L$, we want to show that it is true also for architectures of depth $L+1$. We hence consider a network of depth $L+1$ following the dynamics
    $$\partial_t W^\lay_{ij}(t) = - \sum_{k=1}^{n_{L+1}}\sav{\psi^{L+1;\lay}_{k;ij}(X;t) V_k(Z;t)} - \lambda r(D^{L+1}(t);t)\Delta W^{\lay}_{ij}(t)$$
    and satisfying $\int_0^T\sav{\|V(Z;t)\|}\dd t = O(1)$, $\int_0^Tr(D^{L+1}(t);t)\dd t=O(1)$, $D^{L+1}(t) = O(1)$, for all $t\in [0,T]$.
    
    We note that for $\lay\in[1:L]$
    $$\psi^{L+1;\lay}_{k;ij}(x;t) = \dpar{U^{L+1}_k(x;t)}{W^\lay_{ij}} = \sum_{k'=1}^{n_L}\dpar{U^{L+1}_k(x;t)}{U^L_{k'}}\psi^{L;\lay}_{k';ij}(x;t) = \frac{1}{\sqrt{n_L}}\sum_{k'=1}^{n_L}W^{L+1}_{kk'}(t)\dot\phi(U^L_{k'}(x;t))\psi^{L;\lay}_{k';ij}(x;t)\,.$$
    We define $$\widetilde V_{k'}(z;t) = \frac{1}{\sqrt{n_L}}\sum_{k=1}^{n_{L+1}}W^{L+1}_{kk'}(t)\dot\phi(U^L_{k'}(x;t))V_k(z;t)$$
    and
    $$\widetilde r(D;t) = r(\|\Delta W^{L+1}(t)\|_F^2/2 + D;t)\,,$$
    so that we can rewrite the dynamics for $\lay\in[1:L]$ as
    $$\partial_t W^\lay_{ij}(t) = - \sum_{k'=1}^{n_{L}}\sav{\psi^{L;\lay}_{k';ij}(X;t) \widetilde V_{k'}(Z;t)} - \lambda\widetilde r_t(D^L(t))\Delta W^{\lay}_{ij}(t)\,.$$
    We have that $$\int_0^T\sav{\|\widetilde V(Z;t)\|}\dd t\leq \frac{\gamma_\phi}{\sqrt {n_L}}\int_0^T\|W^{L+1}(t)\|\sav{\|V(Z;t)\|}\dd t\,,$$
    which is of order $O(1)$ if $\|W^{L+1}(t)\|/\sqrt{n_L}$ is. This is indeed the case, as we know that $\|\Delta W^{L+1}(t)\|\leq\|\Delta W^{L+1}(t)\|_F = O(1)$, and $\|W^{L+1}(0)\| = O(\sqrt{n_{L+1}} + \sqrt{n_L})$ (this is a classical result on random matrix theory; see for instance \citealp{vershynin_2012}). Moreover, for each $t$ we have that $\widetilde r(D^L(t);t) = r(D^{L+1}(t);t)$, so that in particular
    $$\int_0^T |\widetilde r(D^L(t);t)|\dd t = \int_0^T |r(D^{L+1}(t);t)|\dd t = O(1)\,.$$
    We can hence apply the inductive hypothesis to the sub-network made of the first $L$ layers, and obtain that, for $\lay\in[1:L]$, $\Theta^\lay$ stays constant during the training and $\Xi^\lay = \Delta U^\lay$. We also recall that at initialisation the kernels $\Theta^\lay$ are diagonal, and so we have that for all $t\in[0:T]$
    $$\Theta^\lay_{kk'}(x,x';t) = \delta_{kk'}\bar\Theta^\lay(x,x')\,.$$

    Now, in order to conclude we need to check that the claim holds also for the last layer. We have
    \begin{align*}
    \Theta^{L+1}_{kk'}(x, x'; t) &= \sum_{\lay=1}^{L+1}\sum_{k=1}^{n_\lay}\psi^{L+1;\lay}_k(x;t)\cdot\psi^{L+1;\lay}_{k'}(x'; t) \\
    &= \psi^{L+1;L+1}_k(x;t)\cdot\psi^{L+1;L+1}_{k'}(x'; t) +\sum_{j, j'=1}^{n_L} \dpar{U^{L+1}_k(x;t)}{U^L_j}\dpar{U^{L+1}_{k'}(x';t)}{U^L_{j'}}\Theta^L_{jj'}(x,x';t) \\  &=\psi^{L+1;L+1}_k(x;t)\cdot\psi^{L+1;L+1}_{k'}(x'; t) + \sum_{j=1}^{n_L}\dpar{U^{L+1}_k(x;t)}{U^L_j}\dpar{U^{L+1}_{k'}(x';t)}{U^L_j}\bar\Theta^L(x,x')\,.
    \end{align*}
    Let us denote as $u_k(x;t)$ the vector with components $u_{k;j}(x;t) = \dpar{U^{L+1}_k(x;t)}{U^L_j}$. We easily see that
    $$\|u_k(x;t)\|\leq \frac{\gamma_\phi}{\sqrt{n_L}}\|W^{L+1}_{k\cdot}(t)\| \leq \frac{\gamma_\phi}{\sqrt{n_L}}\|W_{k\cdot}^{L+1}(0)\| + \frac{\gamma_\phi}{\sqrt{n_L}}\|\Delta W^{L+1}(t)\| = O(1)\,,$$
    where we used that $\|W_{k\cdot}^{L+1}(0)\|=\sqrt{n_L}$ and that the norm of the row of a matrix is always bounded by the Frobenius norm of the matrix. On the other hand, we have that
    $$\|\Delta u_k(x;t)\| \leq \frac{\gamma_\phi}{\sqrt{n_L}}\|\Delta W^{L+1}(t)\| + \frac{\beta_\phi}{\sqrt{n_L}}\|W^{L+1}_{k\cdot}(0)\|_\infty\|\Delta U^L(t)\|\,,$$
    where we used that $\phi$ is $\gamma_\phi$-Lipschitz and $\beta_\phi$-smooth. We know that $\|\Delta W^{L+1}(t)\|=O(1)$ as $D^{L+1}=O(1)$. Moreover $\|W^{L+1}_{k\cdot}(0)\|_\infty$ behaves as the maximum of $n_L$ independent standard normals, that is $\|W^{L+1}_{k\cdot}(0)\|_\infty = O(\sqrt{\log n_L})$ \citep{boucheron13}. On the other hand, $\|\Delta U^L(t)\|=O(1)$ by \Cref{lemma:deltaU}.
    We hence easily conclude that 
    $$\Delta\left(\sum_{j=1}^{n_L}\dpar{U^{L+1}_k(x;t)}{U^L_j}\dpar{U^{L+1}_{k'}(x';t)}{U^L_j}\bar\Theta^L(x,x')\right) = O(\sqrt{\log(n_L)/n_L})\,.$$
    Now to show that the variation of the NTK vanishes during the training we only need to control $\Delta (\psi^{L+1;L+1}_k(x;t)\cdot\psi^{L+1;L+1}_{k'}(x';t))$. First, notice that $$\|\psi^{L+1;L+1}_k(x;0)\| \leq \sqrt{\bar\Theta^{L+1}(x,x')} = O(1)\,.$$
    Moreover, we have that $\psi^{L+1;L+1}_k(x;t)=\frac{\delta_{ik}}{\sqrt{n_L}}\phi(U^L_j(x;t))$ and so
    $$\|\Delta\psi^{L+1;L+1}_k(x;t)\|\leq \frac{\gamma_\phi}{\sqrt{n_L}}\|\Delta U^L(x;t)\| = O(1/\sqrt{n_L})\,.$$
    We thus deduce that $$\Delta (\psi^{L+1;L+1}_k(x;t)\cdot\psi^{L+1;L+1}_{k'}(x';t)) = O(1/\sqrt{n_L})$$ and so $$\Delta \Theta^{L+1}_{kk'}(x, x';t) = O(\sqrt{\log(n_L)/n_L})\,,$$
    which shows that in the infinite-width limit the NTK stays constant during the training.

    Now we are left with showing that $\Xi^{L+1} = \Delta U^{L+1}$. Recalling the notation $u_{k;k'}(x;t)=\dpar{U^{L+1}_k(x;t)}{U^L_{k'}}$, we can write
    $$\Xi^{L+1}_k(x;t) = \sum_{\lay=1}^{L+1}\psi^{L+1;\lay}_k\cdot\Delta W^\lay(t) = \psi^{L+1;L+1}_k(x;t)\cdot\Delta W^{L+1}(t) + u_k(x;t)\cdot \Xi^L(x;t)\,.$$
    Using the induction hypothesis $\Xi^L=\Delta U^L$, we get that 
    $$\Xi^{L+1}_k(x;t) = \psi^{L+1;L+1}_k(x;t)\cdot\Delta W^{L+1}(t) + u_k(x;t)\cdot \Delta U^L(x;t)\,.$$
    Using that $\partial_t U^{L+1}_k = \psi^{L+1;L+1}_k\cdot\partial_t W^{L+1} + u_k\cdot\partial_t U^L$, we can write
    \begin{align*}\Xi^{L+1}_k&(x;t) - \Delta U^{L+1}_k(x;t) \\&= \int_0^t\left(\psi^{L+1;L+1}_k(x;t)-\psi^{L+1;L+1}_k(x;t')\right)\cdot\partial_t W^{L+1}(t')\dd t' + \int_0^t\left(u_k(x;t)-u_k(x;t')\right)\cdot \partial_t U^L(x;t')\dd t'\,.\end{align*}
    In particular, 
    \begin{align*}|\Xi^{L+1}_k&(x;t) - \Delta U^{L+1}_k(x;t)| \\&\leq 2\sup_{t'\in[0, t]}\|\Delta\psi^{L+1;L+1}_k(x;t')\|\int_0^t\|\partial_t W^{L+1}(t')\|_F\dd t' + 2\sup_{t'\in[0, t]}\|\Delta u_k(x;t')\|\int_0^t\|\partial_t U^L(x;t')\|\dd t'\,.\end{align*}
    From what we have shown already, we know that
    $$ \sup_{t'\in[0, t]}\|\Delta\psi^{L+1;L+1}_k(x;t')\| = O(1/\sqrt{n_L})\qquad\text{and}\qquad\sup_{t'\in[0, t]}\|\Delta u_k(x;t')\|=O(\sqrt{\log(n_L)/n_L})\,,$$
    so we are left with checking that the last two integrals are of order $O(1)$ in order to conclude.
    First, we have
    $$\partial_t W^{L+1}_{ij}(t) = -\frac{1}{\sqrt{n_L}}\sav{\phi(U^L_j(X;t))V_i(X;t)}- \lambda r(D^{L+1}(t);t)\Delta W^{L+1}_{ij}(t)\,.$$
    Since $\|\Delta\phi(U^L(x;t))\| = O(1)$, we have $\|\phi(U^L(x;t))\| = O(\sqrt{n_L})$, and we can define
    $$K'=\frac{1}{\sqrt{n_L}}\sup_{x'\in s}\|\phi(U^L(x';t))\| = O(1)\,.$$
    We have thus obtained
    $$\|\partial_t W^{L+1}(t)\| \leq K'\sav{\|V(Z;t)\|} + \lambda |r(D^{L+1}(t);t)|\|\Delta W^{L+1}(t)\|\,.$$
    So, 
    $$\int_0^t \|\partial_t W^{L+1}(t')\|\dd t' \leq K'\int_0^t\sav{\|V(Z;t')\|}\dd t' + \sup_{t'\in[0,t]}\|\Delta W^{L+1}(t')\|\lambda\int_0^t|r(D^{L+1}(t');t')| \dd t' = O(1)\,,$$
    since both integrals in the RHS can be controlled by hypothesis.

    Now we just need to control $\int_0^t\|\partial_t U^L(x;t')\|\dd t'$. Defining $K(x) = \sup_{x'\in s}|\bar\Theta^L(x,x')|$, we easily get that
    $$\|\partial_t U^L(x;t)\|\leq K(x)\sav{\|\widetilde V(Z;t)\|} + \lambda |\widetilde r(D^L(t);t)| \|\Delta U^L(x;t)\|\,.$$
    We have established that  $\int_0^t\sav{\|\widetilde V(Z;t')\|}\dd t' = O(1)$ and  $\sup_{t'\in[0,t]}\|\Delta U^L(x;t')\|=O(1)$. In particular, since by assumption $\int_0^t|r(D^{L+1}(t');t')|\dd t' = O(1)$, we have that indeed
    $$\int_0^t \|\partial_t U^L(x;t')\|\dd t' = O(1)\,.$$
    
    With this last step we have shown that in the infinite width limit 
    $$\Xi^{L+1}(x;t) = \Delta U^{L+1}(x;t)\,,$$
    which concludes the proof.
    \end{proof}
\subsection{Variance of the wide stochastic network}\label{app:Q}
From \citet{clerico23generalisation} (eq.\ 5 therein, applied to a one-dimensional output) we know that the output's variance $Q^2$ of a shallow wide stochastic network is given by 
$$Q^2(x;t) = \frac{1}{n}\sum_{j=1}^n(1 + (\emme^2_j(t))^2)\xi(M^1_j(x;t)) - \frac{1}{n}\sum_{j=1}^n(\emme^2_j(t))^2\psi(M^1_j(x;t))^2\,,$$
where we define $\xi(u) = \E_{\zeta\sim\N(0,1)}[\phi(\zeta+u)^2]$ and we recall that $\psi(u) = \E_{\zeta\sim\N(0,1)}[\phi(\zeta+u)]$.

We now consider an initialisation where each component of $\emme^1$ and $\emme^2$ is sampled independently from $\N(0,1)$. Then, since for any input $x$ we have $\|x\|=\sqrt{n_0}$, each component of $M^1_j(0)$ is distributed (with respect to the initialisation's randomness) as a standard normal distribution. Since $\emme^2$ is independent of $M^1(0)$ we can easily derive that, in the limit $n\to\infty$,
$$Q^2(x;0) = 2\E_{\zeta\sim\N(0,1)}[\xi(\zeta)] - \E_{\zeta\sim\N(0,1)}[\psi(\zeta)^2]=\sigma^2\,,$$
which is a deterministic value.

We now show here that $Q^2$ keeps constant during the training. We have that $$\|\Delta M^1(x;t)\|\leq\frac{1}{\sqrt n_0}\|\Delta\emme^1(t) \|\|x\| = \|\Delta\emme^1(t) \| = O(1)\,.$$

Now, let $u_j(t) = 1 + \emme_j^2(t)^2$. We have that
$$\Delta\left(\sum_{j=1}^n(1 + (\emme^2_j(t))^2)\xi(M^1_j(x;t))\right) =  u(0)\cdot\Delta\xi(M^1(x;t)) + \Delta u(t)\cdot \xi(M^1(x;0)) + \Delta u(t)\cdot\Delta\xi(M^1(x;t))\,.$$
Let us start by the first term. We have that
$$\Delta\xi(M^1_j(x;t)) = \E_{\zeta\sim\N(0,1)}\Big[(2\phi(M^1_j(x;0) + \zeta)+\Delta\phi(M^1_j(x;t) + \zeta) )\Delta\phi(M^1_j(x;t)+\zeta)\Big]\,,$$
and so (recalling that we are assuming that $\phi$ is $C_\phi$ Lipschitz) 
\begin{align*}|u(0)&\cdot\Delta\xi(M^1(x;t))| \\&\leq 2\left|\sum_{j=1}^n\E_{\zeta\sim\N(0,1)}\Big[u_j(0)\phi(M^1_j(x;0)+\zeta)\Delta\phi(M^1_j(x;t)+\zeta)\Big]\right| + \left|\sum_{j=1}^n\E\Big[u_j(0)\Delta\phi(M^1_j(x;t)+\zeta)^2\Big]\right|\\&
\leq2C_\phi\|\Delta M^1(x;t)\|\E_{\zeta\sim\N(0,1)}\left[\sum_{j=1}^n\|u_j(0)\phi(M^1_j(x;0)+\zeta)\|^2\right]^{1/2} + C_\phi^2\|\Delta M^1(x;t)\|^2\|u(0)\|\,.
\end{align*}
For large $n$ we have that
$$\frac{1}{n}\sum_{j=1}^n\|u_j(0)\phi(M^1_j(x;0)+\zeta)\|^2 \to 2\E_{\zeta'\sim\N(0,1)}[\phi(\zeta'+\zeta)^2] = O(1)$$
(in probability with respect to the random initialisation) and $\|u(0)\|\to\sqrt {2n}$. Since $\|\Delta M^1(x;t)\| = O(1)$, we have that
$$u(0)\cdot\Delta\xi(M^1(x;t)) = O(\sqrt n)\,.$$
Proceeding similarly we get that
\begin{align*}|\Delta u(t)\cdot \xi(M^1(x;0))| &\leq 2\left(\sum_{j=1}^n\emme^2_j(0)^2\xi(M^1(x;0))\right)^{1/2}\|\Delta \emme^2(t)\| + \|\Delta \emme^2(t)\|_4^2\|\xi(M^1(x;0)\|\\&\leq2\left(\sum_{j=1}^n\emme^2_j(0)^2\xi(M^1(x;0))\right)^{1/2}\|\Delta \emme^2(t)\| + \|\Delta \emme^2(t)\|_2^2\|\xi(M^1(x;0)\| = O(\sqrt n)\,.\end{align*}
Finally, we have that
$$\Delta u(t)\cdot\Delta\xi(M^1(x;t)) = 2\sum_{j=1}^n\emme^2_j(0)\Delta\emme^2_j(t)\Delta\xi(M^1_j(t)) + \sum_{j=1}^n(\Delta\emme^2_j(t))^2\Delta\xi(M^1_j(t))\,.$$
We have that
\begin{align*}
    &\left|\sum_{j=1}^n\emme^2_j(0)\Delta\emme^2_j(t)\Delta\xi(M^1_j(t))\right| \\&\quad\qquad\qquad\leq 2C_\phi \E_{\zeta\sim\N(0,1)}\left[\sum_{j=1}^n\emme^2(0)^2\phi(\zeta + M^1_j(x;0))^2\right]^{1/2}\|\Delta\emme^2(t)\|_4\|\Delta M^1(x;t)\|_4 \\& \qquad\qquad\qquad\qquad\qquad\qquad\qquad\qquad\qquad\qquad+  C_\phi^2 \|\emme^2(0)\|\|\Delta\emme^2(t)\|_4\|\Delta M^1(x;t)\|_8^2
    = O(\sqrt n)
\end{align*}
and
\begin{align*}
    &\left|\sum_{j=1}^n\Delta\emme^2_j(t)^2\Delta\xi(M^1_j(t))\right| \\&\leq 2C_\phi \E_{\zeta\sim\N(0,1)}\left[\|\phi(\zeta + M^1(x;0))\|^2\right]^{1/2}\|\Delta\emme^2(t)\|_8^2\|\Delta M^1(x;t)\|_4 + C_\phi^2\|\Delta\emme^2(t)\|_8^2\|\Delta M^1(x;t)\|_8^2= O(\sqrt n)\,.
\end{align*}

With analogous reasoning, we can obtain that $$\Delta\left(\frac{1}{n}\sum_{j=1}^n(\emme^2_j(t))^2\psi(M^1_j(x;t))^2\right) = O(1/\sqrt n)\,,$$
and so conclude that
$$\Delta Q^2(x;t) = O(1/\sqrt n)\,,$$
namely the output's variance is constant to the deterministic value $\sigma^2$ throughout the training, as $n\to\infty$. 
\end{document}